\documentclass{article}

\PassOptionsToPackage{numbers, compress}{natbib}
\usepackage[preprint]{neurips_2026}
\usepackage{mathrsfs}
\usepackage[utf8]{inputenc} 
\usepackage[T1]{fontenc}    
\usepackage{fix-cm} 
\usepackage{hyperref}       
\usepackage{url}            
\usepackage{booktabs}       
\usepackage{amsfonts}       
\usepackage{bm}             
\usepackage{amsmath}
\usepackage{mathrsfs}
\usepackage{amsthm}
\usepackage{amssymb}
\usepackage{mathtools}
\usepackage{enumitem}
\usepackage{multirow}
\usepackage{cancel}
\usepackage{wrapfig}
\usepackage{comment}
\usepackage{inconsolata} 
\usepackage{tikz-cd}
\usepackage[most]{tcolorbox}
\usetikzlibrary{matrix,arrows,arrows.meta,backgrounds,positioning,shapes,shapes.multipart,fit,calc,shadows}
\usepackage{cleveref}
\usepackage{thmtools}
\usepackage{nicefrac}       
\usepackage{microtype}    
\usepackage{listings}
\usepackage[dvipsnames]{xcolor}

\usepackage{listings}
\lstdefinelanguage{Lean}{
  keywords={theorem, lemma, def, where, by, have, show, exact, apply, intro, intros, rw, simp, ring, linarith, omega, norm_num, calc, fun, let, in, match, with, if, then, else, return, do, forall, exists, and, or, not, true, false, Type, Prop, Sort, try, simp_all, constructor, decide, native_decide, trivial, nlinarith, field_simp, ring_nf, exact?},
  keywordstyle=\color{blue!70!black}\bfseries,
  comment=[l]{--},
  morecomment=[s]{/-}{-/},
  commentstyle=\color{gray}\itshape,
  stringstyle=\color{orange},
  basicstyle=\ttfamily\small,
  breaklines=true,
  frame=single,
  rulecolor=\color{black!20},
  backgroundcolor=\color{gray!5},
  numbers=none,
}
\lstnewenvironment{lean}{\lstset{language=Lean}}{}
\newcommand{\leaninline}[1]{\lstinline[language=Lean]{#1}}

\setlist[itemize]{leftmargin=0.3in}
\setlist[enumerate]{leftmargin=0.3in}

\DeclareMathOperator{\ob}{\textnormal{ob}}

\definecolor{tacticStroke}{HTML}{E0E0E0}

\definecolor{lkwColor}{HTML}{1F4D80}
\definecolor{ltyColor}{HTML}{2D8077}
\definecolor{ltacColor}{HTML}{B85C00}
\newcommand{\lkw}[1]{{\color{lkwColor}\textbf{#1}}}
\newcommand{\lty}[1]{{\color{ltyColor}#1}}
\newcommand{\ltac}[1]{{\color{ltacColor}#1}}

\usepackage[most]{tcolorbox}
\newtcolorbox{samplebox}[1]{
	enhanced,
	colframe=black!60,
	boxrule=0.6pt,
	arc=6pt,
	left=4pt, right=4pt, top=4pt, bottom=4pt,
	boxsep=3pt,
	before upper={\noindent\textbf{\small #1}\par\smallskip},
}

\newtheorem{theorem}{Theorem}

\newtheorem{lemma}[theorem]{Lemma}
\newtheorem{claim}[theorem]{Claim}
\newtheorem{proposition}[theorem]{Proposition}
\newtheorem{corollary}[theorem]{Corollary}
\theoremstyle{definition}
\newtheorem{definition}[theorem]{Definition}
\newtheorem{remark}[theorem]{Remark}
\newtheorem{example}[theorem]{Example}
\newtheorem{assumption}[theorem]{Assumption}

\newcommand{\draft}[1]{\textcolor{black}{#1}}

\newcommand{\Stmt}{\mathsf{Stmt}}
\newcommand{\Set}{\mathbf{Set}}

\newcommand{\RT}{\mathscr{R}_\mathcal{T}}
\newcommand{\DpT}{\mathbb{D}_+(\mathcal{T}^*)}
\newcommand{\PTR}{P_{[T]_\mathscr{R}}}

\definecolor{mini}{RGB}{120,80,200}
\definecolor{proof}{RGB}{60,120,220}
\definecolor{ineq}{RGB}{60,150,90}

\newtcolorbox{datasetbox}[2][]{
  enhanced,
  width=#2,
  colback=white,
  colframe=#1,
  boxrule=0.8pt,
  arc=6pt,
  left=6pt,right=6pt,top=6pt,bottom=6pt
}

\title{What are the Right Symmetries\\for Formal Theorem Proving?}

\author{
Krzysztof Olejniczak\textnormal{\textsuperscript{1}\quad}
Radoslav Dimitrov\quad
Xingyue Huang\textnormal{\textsuperscript{1}\quad}\\
\textbf{Bernardo Cuenca Grau}\textnormal{\textsuperscript{1}\quad}
\textbf{Jinwoo Kim}\thanks{Equal advising.}\textnormal{\:\:\:\textsuperscript{2}\quad}
\textbf{{\.I}smail {\.I}lkan Ceylan}\textnormal{\textsuperscript{* 3,4,1}}\\
\textsuperscript{1}University of Oxford\quad
\textsuperscript{2}KAIST\quad
\textsuperscript{3}TU Wien\quad
\textsuperscript{4}AITHYRA\quad
}

\begin{document}

\maketitle

\begin{abstract}
Formal theorem provers based on large language models (LLMs) are highly sensitive to superficial variations in problem representation: semantically equivalent statements can exhibit drastically different proof success rates, revealing a failure to respect structural symmetries inherent in formal mathematics. This raises a central question: \emph{what are the right symmetries for formal theorem proving?}
We introduce \emph{rewriting categories}, a category-theoretic framework capturing the compositional, generally non-invertible transformations induced by proof tactics, and use it to formalize two symmetry notions: \emph{proof equivariance}, governing how proof distributions transform under rewrites, and \emph{success invariance} (i.e., invariance of success probability), requiring equivalent statements to be solved with the same probability. 
\draft{We observe that state-based next-tactic provers naturally satisfy proof equivariance by operating on proof states.}
In contrast, state-of-the-art LLM-based provers satisfy neither property, exhibiting large performance variation across equivalent formulations.
To mitigate this, we propose test-time methods that aggregate over equivalent rewritings of the input, showing theoretically that they recover success invariance in the sampling limit, and empirically, that they improve robustness and performance under fixed inference budgets. Our results highlight symmetry as a key missing inductive bias in LLM-based theorem proving and suggest test-time computation as a practical route to approximate it.
\end{abstract}
\begin{figure}[h]
    \centering
    \vspace{-0.2em}
%
\definecolor{gcdTitle}{HTML}{2D5BBF}      
\definecolor{gcdSucc}{HTML}{2A9C46}       
\definecolor{gcdFail}{HTML}{C13A3A}       
\definecolor{gcdMid}{HTML}{C19A00}        

\newcommand{\gcdCell}[3]{%
    \begin{minipage}[t][15mm][t]{\linewidth}%
        {\ttfamily\scriptsize #1}%
        \vfill
        \hfil{\sffamily\bfseries\scriptsize\color{#2}#3}\hfil%
    \end{minipage}%
}

\resizebox{\linewidth}{!}{%
\begin{tikzpicture}[
    font=\small,
    hdr/.style={
        draw=gcdTitle!75, fill=gcdTitle!16, text=gcdTitle,
        rounded corners=3pt, line width=0.5pt,
        font=\sffamily\bfseries\footnotesize, align=center,
        minimum height=2.4em, inner sep=4pt,
        drop shadow={shadow xshift=0pt, shadow yshift=-0.7pt, opacity=0.18, fill=black!60},
    },
    cell/.style={
        draw, rounded corners=3pt, line width=0.6pt,
        fill=white, align=left,
        minimum height=21mm,
        inner sep=5pt,
        drop shadow={shadow xshift=0pt, shadow yshift=-1pt, opacity=0.20, fill=black!60},
    },
    succCell/.style={cell, draw=gcdSucc!75},
    midCell/.style ={cell, draw=gcdMid!85},
    failCell/.style={cell, draw=gcdFail!75},
    narrow/.style={text width=0.27\linewidth},
    wide/.style={text width=0.43\linewidth},
]
\matrix[
    matrix of nodes,
    ampersand replacement=\&,
    column sep=3pt, row sep=4pt,
    nodes={anchor=north},
]{
    |[hdr, narrow]| {1) Commuting arguments}
    \&
    |[hdr, wide  ]| {2) Algebraic transformations}
    \&
    |[hdr, narrow]| {3) Alternative formulations}
    \\
    |[succCell, narrow]|  {\gcdCell
        {\lkw{theorem} number\_theory\_1 :\\
         \hspace*{1em}Nat.gcd 200000 20! = 40000}
        {gcdSucc}{Success rate: 63/64 (98.4\%)}}
    \&
    |[succCell,  wide  ]| {\gcdCell
        {\lkw{theorem} algebra\_1 (n : \lty{$\mathbb{N}$}) (h$_0$ : n \% 2 = 0)\\
         \hspace*{1em}(h$_1$ : (n-2)*(n-2) + n*n + (n+2)*(n+2) = 12296)\\
         \hspace*{1em}: (n * (n+2) * (n-2)) / 8 = 32736}
        {gcdSucc}{Success rate: 32/64 (50.0\%)}}
    \&
    |[succCell, narrow]| {\gcdCell
        {\lkw{theorem} induction\_1 (n : \lty{$\mathbb{N}$})\\
         \hspace*{1em}: (20 + 4 * 4\textasciicircum n) \% 12 = 0}
        {gcdSucc}{Success rate: 62/64 (96.9\%)}}
    \\
    |[failCell, narrow]|  {\gcdCell
        {\lkw{theorem} number\_theory\_2 :\\
         \hspace*{1em}Nat.gcd 20! 200000 = 40000}
        {gcdFail}{Success rate: 44/64 (68.9\%)}}
    \&
    |[failCell, wide  ]| {\gcdCell
        {\lkw{theorem} algebra\_2 (n : \lty{$\mathbb{N}$}) (h$_0$ : \lty{Even} n)\\
         \hspace*{1em}(h$_1$ : (n-2) \textasciicircum\: 2 + n \textasciicircum\:2 + (n+2) \textasciicircum\:2 = 12296)\\
         \hspace*{1em}: (n-2) * n * (n+2) / 8 = 32736}
        {gcdFail}{Success rate: 3/64 (4.7\%)}}
    \&
    |[failCell, narrow]| {\gcdCell
        {\lkw{theorem} induction\_2 (n : \lty{$\mathbb{N}$})\\
         \hspace*{1em}: 12 \textbar\ 4\textasciicircum (n + 1) + 20}
        {gcdFail}{Success rate: 13/64 (20.3\%)}}
    \\
};
\end{tikzpicture}%
}
    \vspace{-1em}
     \caption{
    Sensitivity of DeepSeek-Prover-V2 \cite{ren2025deepseek} to semantics-preserving rewrites. Each pair of statements is mathematically equivalent, yet exhibits drastically different proof success rates under minor transformations. This highlights a systematic violation of symmetry: the prover’s success probability depends strongly on the representation instead of the underlying mathematical content.
     }
    \label{fig:gcd_example}
\end{figure}

\section{Introduction}
\vspace{-0.5em}
Large language model (LLM) theorem provers, operating on textual representations of formal statements, are highly sensitive to how problems are expressed \citep{zhao2025ineq, tian2025evolprover}.
As shown in Figure~\ref{fig:gcd_example}, even simple, semantics-preserving rewrites, such as reordering arguments, applying elementary algebraic transformations, or alternative notation, can lead to drastic changes in proof success rates.
In some cases, a theorem that is solved nearly always becomes almost unsolvable under another formulation, despite the two statements being mathematically equivalent.
This behavior, widely observed in mathematical reasoning and language understanding \citep{mirzadeh2024gsm, irpan2025consistency, berglund2023reversal, wu2025rewordbench, wei2025equibench}, reveals a fundamental limitation of current systems: failure to generalize across different representations of the same problem.

\begin{wrapfigure}{r}{0.42\textwidth}
    \centering
    \vspace{-0.5\baselineskip}
%

\definecolor{nitThmHdr}{HTML}{ECB8E8}
\definecolor{nitThmHdrText}{HTML}{6E2A85}
\definecolor{nitThmDraw}{HTML}{B776BC}

\resizebox{0.8\linewidth}{!}{%
\begin{tikzpicture}[
    font=\sffamily\normalsize,
    >={Stealth[length=2.6mm,width=2.0mm]},
    thmbox/.style={
        rectangle split, rectangle split parts=2,
        rectangle split part fill={nitThmHdr, white},
        rectangle split draw splits=true,
        rounded corners=3pt, draw=nitThmDraw, line width=0.4pt,
        fill=white,
        text width=6.2cm, inner xsep=6pt, inner ysep=4pt,
        align=left,
        font=\ttfamily\footnotesize,
        text=black,
    },
    tacticpill/.style={
        draw=orange!60!black!50, rounded corners=4pt, fill=yellow!35, line width=0.4pt,
        inner sep=5pt, font=\ttfamily\footnotesize,
        drop shadow={shadow xshift=0pt, shadow yshift=-0.7pt, opacity=0.18, fill=black!60},
    },
]
    \node[thmbox] (orig) at (0,0) {%
        {\sffamily\bfseries\footnotesize\color{nitThmHdrText}Original statement}
        \nodepart{two}%
        \lkw{theorem} T (x : \lty{$\mathbb{R}$}) (h : x > 1) : \ldots \lkw{:= by}};

    \node[thmbox, below=1.5cm of orig] (aug) {%
        {\sffamily\bfseries\footnotesize\color{nitThmHdrText}Updated statement}
        \nodepart{two}%
        \lkw{theorem} T (x : \lty{$\mathbb{R}$}) (h : x > 0) : \ldots \lkw{:= by}};

    \path (orig.south) -- coordinate[midway] (midarr) (aug.north);
    \draw[->, line width=0.7pt] (orig.south) -- (aug.north);

    \node[tacticpill, anchor=center] at ($(midarr)$) {%
        \ltac{replace} h : x > 0 \lkw{:=} \lkw{by} linarith};
\end{tikzpicture}%
}
    \caption{An example of tactic irreversibility in Lean. Applying a tactic transforms $x>1$ into $x>0$, losing information.}
    \label{fig:statement_augmentation}
    \vspace{-0.5\baselineskip}
\end{wrapfigure}
LLM-based provers \citep{polu2020generative, yang2023leandojo, xin2025deepseekproverv} process statements as text, making their behavior depend on surface form instead of the underlying mathematical structure.
This discrepancy motivates a central question: \emph{what are the right symmetries for formal theorem proving?}
In classical machine learning, symmetries are typically modeled via group actions \citep{bronstein2021geometric}, which capture invertible transformations such as rotations or permutations. However, this framework is insufficient for formal reasoning.
In proof assistants such as Lean \citep{moura2021lean}, transformations between statements arise via sequences of tactics, which are generally non-invertible and may discard information (see \Cref{fig:statement_augmentation}).
Consequently, the resulting transformation space cannot be characterized by group symmetries alone.

In this work, we introduce \emph{rewriting categories}, a category-theoretic framework for modeling transformations between theorem statements. 
This perspective allows us to formalize two symmetry notions for formal theorem provers: \textbf{proof equivariance}, which requires proof distributions to transform consistently along rewrites, and \textbf{success invariance} (i.e., invariance of success probability), which requires equivalent statements to be solved with the same probability.
\draft{Proof equivariance is naturally satisfied by next-tactic predictors~\citep{yang2023leandojo, li2024hunyuanprover, xin2025bfsprover} when used generate proofs sequentially from intermediate proof states.}
On the other hand, modern state-of-the-art LLM-based provers~\cite{ren2025deepseek, lin2025goedel, lin2025goedel_v2, kimina_prover_2025} satisfy neither property, exhibiting large variability across equivalent formulations.

We show that this lack of symmetry can be mitigated by aggregating over equivalent rewritings of the input statement.
We prove that such procedures recover success invariance in the sampling limit, and empirically demonstrate improved robustness and performance under fixed inference budgets. These results suggest that symmetry, defined in a categorical sense, is a key missing inductive bias in current LLM-based theorem provers, and that it can be effectively approximated through test-time computation without modifying the underlying model.
Our contributions can be summarized as\footnote{Code available at: \href{https://github.com/kolejnyy/rw-ensembles}{https://github.com/kolejnyy/rw-ensembles}}:

\begin{enumerate}[noitemsep,nolistsep,label=(\arabic*),leftmargin=*]
    \item
    \textbf{Category-theoretic framework.} We introduce rewriting categories, a category-theoretic framework for modeling transformations between formal theorem statements, and formalize two symmetry notions: proof equivariance and success invariance.
    \item \textbf{Empirical analysis of invariance.} We construct miniF2F-rw, a benchmark consisting of semantically equivalent reformulations of formal problems, and show that state-of-the-art LLM-based provers exhibit large violations of success invariance.
    \item \textbf{Test-time invariance.} We propose a simple, model-agnostic test-time procedure that aggregates over equivalent rewritings of the input, and theoretically prove that this approach recovers success invariance in the sampling limit. 
    \item \textbf{Competitive performance.} We demonstrate empirically that the proposed method improves both robustness to input reformulations and overall proof success under fixed inference budgets.
\end{enumerate}
\vspace{-0.5em}
\section{Formal theorem proving in Lean}
\label{sec:formal_theorem_proving}

\noindent\textbf{Theorem proving in Lean.}
Formal theorem proving in Lean is structured around transformations of proof states \citep{moura2021lean}. A theorem statement is compiled into an initial state representing assumptions and goals, and proofs are constructed by applying sequences of tactics that transform this state until no goals remain. This operational view exposes a key structural property: transformations between statements arise through sequences of tactics, which are compositional but generally non-invertible.

Formally, let $\Stmt$ denote the set of all valid theorem statements, $\mathcal{T}$ the set of all tactics, and $\Omega$ the set of all Lean states, with $\Omega$  containing an \emph{empty state} $\varnothing$ (no remaining goals) and an \emph{error state} $\bot$ (failure during compilation).
Given a theorem $T$, we will write $\mathcal{L}(T)$ for the corresponding state.

Each tactic $t \in \mathcal{T}$ can be seen as a function $t:\Omega \to \Omega$.
In this view, a sequence of tactics $\mathbf{t}=(t_1, \dots, t_k)$ is a valid proof of a statement $T$, if their composition maps $\mathcal{L}(T)$ to the empty state:
\[
{\bf t}(\mathcal{L}(T)) = (t_k \circ t_{k-1} \circ \cdots \circ t_1)(\mathcal{L}(T)) = \varnothing.
\]

\textbf{Formal LLM-based provers.}
We focus on single-pass whole-proof generation models \citep{ren2025deepseek,lin2025goedel,lin2025goedel_v2, kimina_prover_2025}. In this setting, an LLM defines a distribution $p_\theta(\cdot \mid T)$ over sequences of tactics (i.e., candidate proofs), conditioned on a Lean statement $T$. The~success rate $s_\theta(T)$ is then the probability that a sampled sequence of tactics $\mathbf{t}$ transforms the initial state $\mathcal{L}(T)$ into $\varnothing$:
\[
s_\theta(T) \;\coloneqq\; \sum_{{\bf t}} p_\theta({\bf t} \mid T)\, \mathbf{1}\!\left({\bf t}(\mathcal{L}(T)) = \varnothing\right)
\]
Importantly, both the input theorems and output proofs are represented as text.

\begin{figure}[t]
    \vspace{-2em}
    \centering
%

\definecolor{nitThmHdr}{HTML}{ECB8E8}
\definecolor{nitThmHdrText}{HTML}{6E2A85}
\definecolor{nitThmDraw}{HTML}{B776BC}

\definecolor{nitPrfHdr}{HTML}{800010}

\definecolor{nitLemHdr}{HTML}{B6DFBF}
\definecolor{nitLemHdrText}{HTML}{1F6938}
\definecolor{nitLemDraw}{HTML}{6BB07F}

\definecolor{nitStateHdr}{HTML}{D8E2F0}
\definecolor{nitStateHdrText}{HTML}{1F4D80}
\definecolor{nitStateDraw}{HTML}{9CB4D2}
\definecolor{nitStateHl}{HTML}{1F4D80}
\definecolor{nitStateRule}{HTML}{C5CDD7}

\resizebox{\linewidth}{!}{%
\begin{tikzpicture}[
    font=\sffamily\small,
    >={Stealth[length=2.6mm,width=2.0mm]},
    thmbox/.style={
        rectangle split, rectangle split parts=2,
        rectangle split part fill={nitThmHdr, white},
        rectangle split draw splits=true,
        rounded corners=3pt, draw=nitThmDraw, line width=0.4pt,
        fill=white,
        text width=5.55cm, inner xsep=6pt, inner ysep=4pt,
        align=left,
        font=\ttfamily\scriptsize,
        text=black,
    },
    lembox/.style={
        rectangle split, rectangle split parts=2,
        rectangle split part fill={nitLemHdr, white},
        rectangle split draw splits=true,
        rounded corners=3pt, draw=nitLemDraw, line width=0.4pt,
        fill=white,
        text width=4.05cm, inner xsep=6pt, inner ysep=4pt,
        align=left,
        font=\ttfamily\scriptsize,
        text=black,
    },
    statebox/.style={
        rectangle split, rectangle split parts=2,
        rectangle split part fill={nitStateHdr, white},
        rectangle split draw splits=true,
        rounded corners=3pt, draw=nitStateDraw, line width=0.4pt,
        fill=white,
        text width=3.05cm, inner xsep=6pt, inner ysep=4pt,
        align=left,
        font=\ttfamily\scriptsize,
        text=black,
    },
    proofbox/.style={
        rectangle split, rectangle split parts=2,
        rectangle split part fill={nitPrfHdr, white},
        rectangle split draw splits=true,
        rounded corners=3pt, draw=nitPrfHdr, line width=0.4pt,
        fill=white,
        text width=5.55cm, inner xsep=6pt, inner ysep=4pt,
        align=left,
        font=\ttfamily\scriptsize,
        text=black,
    },
    arrlbl/.style={font=\sffamily\scriptsize\itshape, fill=white, inner sep=2pt},
    tacticpill/.style={
        draw=orange!60!black!50, rounded corners=4pt, fill=yellow!35, line width=0.4pt,
        inner sep=4pt, font=\ttfamily\scriptsize,
        align=center,
        drop shadow={shadow xshift=0pt, shadow yshift=-0.7pt, opacity=0.18, fill=black!60},
    },
]
    \node[thmbox] (t1) at (0,0) {%
        {\sffamily\bfseries\scriptsize\color{nitThmHdrText}Lean theorem statement \(T_1\in\Stmt\)}
        \nodepart{two}%
        \lkw{theorem} thm1 (x y : \lty{$\mathbb{N}$}) (h\textsubscript{0} : Nat.lcm x y = 12)\\
        \mbox{}\hspace*{1.2em}(h\textsubscript{1} : Nat.gcd x y = 2) : x * y = 24 \lkw{:= by}};

    \node[lembox] (lem) at (5.9,0) {%
        {\sffamily\bfseries\scriptsize\color{nitLemHdrText}Mathlib lemma}
        \nodepart{two}%
        \lkw{theorem} Nat.gcd\_mul\_lcm (x y : \lty{$\mathbb{N}$})\\
        \mbox{}\hspace*{1.2em}: x.gcd y * x.lcm y = x * y};

    \node[proofbox, anchor=north west] (prf) at ($(lem.north east)+(1.1cm,0)$) {
        {\sffamily\bfseries\scriptsize\color{nitPrfHdr}Resulting proof}
        \nodepart{two}%
        \lkw{theorem} thm1 (x y : \lty{$\mathbb{N}$}) (h\textsubscript{0} : Nat.lcm x y = 12)\\
        \mbox{}\hspace*{1.2em}(h\textsubscript{1} : Nat.gcd x y = 2) : x * y = 24 \lkw{:= by}\\
        \mbox{}\hspace*{1.2em}rw [\(\leftarrow\) Nat.gcd\_mul\_lcm x y]\\
        \mbox{}\hspace*{1.2em}norm\_num [h\textsubscript{0}, h\textsubscript{1}]
    };

    \node[statebox, below=1.2cm of t1, text width=3.25cm] (s1) {%
        {\sffamily\bfseries\scriptsize\color{nitStateHdrText}Compiled state \(\mathcal{L}(T_1)\in\Omega\)}
        \nodepart{two}%
        x y : \(\mathbb{N}\)\\
        {\color{nitStateHl}h\textsubscript{0} : x.lcm y = 12}\\
        {\color{nitStateHl}h\textsubscript{1} : x.gcd y = 2}\\[-3pt]
        {\color{nitStateRule}\rule{\linewidth}{0.3pt}}\\[1pt]
        \(\vdash\) x * y = 24};

    \node[statebox, anchor=north west] at ($(s1.north east)+(4.4cm,0)$) (s2) {%
        {\sffamily\bfseries\scriptsize\color{nitStateHdrText}Updated state}
        \nodepart{two}%
        x y : \(\mathbb{N}\)\\
        {\color{nitStateHl}h\textsubscript{0} : x.lcm y = 12}\\
        {\color{nitStateHl}h\textsubscript{1} : x.gcd y = 2}\\[-3pt]
        {\color{nitStateRule}\rule{\linewidth}{0.3pt}}\\[1pt]
        \(\vdash\) x.gcd y * x.lcm y = 24};

    \node[statebox, anchor=west, text width=2.05cm] at ($(s2.east)+(3.05cm,0)$) (s3) {%
        {\sffamily\bfseries\scriptsize\color{nitStateHdrText}Updated state}
        \nodepart{two}%
        no goals};

    \draw[->, dashed, line width=0.5pt]
        (t1.south) -- node[arrlbl, right=1pt] {compile} (s1.north);

    \draw[->, line width=0.7pt] (s1.east) -- (s2.west);
    \coordinate (m12) at ($(s1.east)!0.5!(s2.west)$);
    \node[tacticpill, above=5pt of m12] {%
        \ltac{rw} [\(\leftarrow\) Nat.gcd\_mul\_lcm x y]};

    \draw[->, line width=0.7pt] (s2.east) -- (s3.west);
    \coordinate (m23) at ($(s2.east)!0.5!(s3.west)$);
    \node[tacticpill, above=5pt of m23] {%
        \ltac{norm\_num} [h$_0$, h$_1$]};
\end{tikzpicture}%
}
    \vspace{-1em}
    \caption{An example of a successful proof in Lean. The original statement $T$ is converted into an~initial state~$\mathcal{L}(T)$, and~tactics are then sequentially executed, until the empty state $\varnothing$ is reached.}
    \label{fig:gcd_mul_lcm_proof}
\end{figure}

\textbf{The structure of Lean statement space.}
In Lean, transformations between statements arise through sequences of tactics acting on proof states, linking statements $T$ and $T'$ whenever $\mathcal{L}(T') = \mathbf{t}(\mathcal{L}(T))$. While it is tempting to model this structure via group actions, such an abstraction is inappropriate. These transformations are compositional but generally non-invertible: multiple statements may map to the same state, and tactic application can discard information, preventing reversal. As a result, the transformation structure cannot be captured by group symmetries.

\section{Formalization of mathematical symmetries via rewriting categories}
\label{sec:formalization_lean_space}

We now formalize the transformations described in \Cref{sec:formal_theorem_proving} using category theory. We model theorem statements as objects and transformations between them as composable, generally non-invertible morphisms. This induces a categorical structure that captures the behavior of tactics, which we refer to as a rewriting category. This framework provides the foundation for defining symmetry in formal theorem proving and explains inductive biases present in prior works as special cases.

\subsection{Category theory preliminaries}
\label{sec:category_theory_preliminaries}
We recall the basic categorical notions needed for our construction; see Appendix \ref{app:extended_category} for more details.
\begin{definition}[\bf Category]
\label{def:category}
A \emph{category} $\mathscr{C}$ consists of
\vspace{-0.3em}
\begin{itemize}[noitemsep,nolistsep]
\item a collection of \emph{objects} $\ob(\mathscr{C})$,
\item for each $A,B\in\ob(\mathscr{C})$, a collection $\mathscr{C}(A,B)$ of \emph{maps} or \emph{arrows} from $A$ to $B$,
\item for each pair of arrows $f\in\mathscr{C}(A,B)$ and $g\in \mathscr{C}(B,C)$, their \emph{composition} $g\circ f\in\mathscr{C}(A,C)$,
\item for each object $A\in \ob(\mathscr{C})$, an element $1_A$ of $\mathscr{C}(A,A)$, called the \emph{identity} on $A$,
\end{itemize}
\vspace{-0.3em}
satisfying the following axioms:
\vspace{-0.3em}
\begin{itemize}[noitemsep,nolistsep]
\item \emph{associativity}: $(h\circ g)\circ f= h\circ(g\circ f)$ for any $f\in \mathscr{C}(A,B)$, $g\in \mathscr{C}(B,C)$ and $h\in \mathscr{C}(C,D)$,
\item \emph{identity laws}: for each $f\in \mathscr{C}(A,B)$, we have $f\circ1_A = f = 1_B\circ f$.
\end{itemize}
\end{definition}
We often write $A\in\mathscr{C}$ to mean $A\in\ob(\mathscr{C})$, and $f:A\to B$ or $A \xrightarrow{f} B$ to mean $f\in \mathscr{C}(A,B)$.
We will also write $A \leftrightarrows B$ for objects $A,B\in \mathscr{C}$ such that there exist arrows $A\rightarrow B$ and $B\rightarrow A$ in $\mathscr{C}$. 
When for all objects $A,B\in\mathscr{C}$, there is an arrow $A\to B$ if and only if there is an arrow $B\to A$, we call $\mathscr{C}$ \emph{reciprocal}.
If categories $\mathscr{C}$ and $\mathscr{D}$ satisfy $\ob(\mathscr{C}) \subseteq \ob(\mathscr{D})$, and for every $A,B\in \mathscr{C}$, we have $\mathscr{C}(A,B) \subseteq \mathscr{D}(A,B)$, we call $\mathscr{C}$ a \emph{subcategory} of $\mathscr{D}$, denoted as $\mathscr{C} \subseteq \mathscr{D}$.
\begin{example}\label{example:category_of_sets}
There is a category $\mathbf{Set}$ whose objects are sets, whose maps are functions between sets, with function composition as composition and identity functions as identities. 
\end{example}
The following example instantiates this structure for Lean theorem proving.
\begin{example}\label{example:category_of_lean}
There is a \emph{full tactic rewriting category} $\RT$, whose objects are Lean statements $\Stmt$ and arrows represent sequences of tactics. More precisely, if statements $T,T'\in \Stmt$ satisfy $\mathcal{L}(T') = \mathbf{t}(\mathcal{L}(T))$ for some sequence of tactics $\bf t$, there is an arrow $\mathbf{t} : T\rightarrow T'$.
Identities correspond to the empty sequence of tactics,
representing no action. Composition becomes concatenation:
\[
\mathbf{t}' \circ \mathbf{t} = [\mathbf{t} : \mathbf{t}'] \qquad \text{whenever}\qquad T \xrightarrow{\mathbf{t}} T' \xrightarrow{\mathbf{t}'} T'' 
\]
\end{example}

To let categories act on concrete data, we use structure-preserving maps, called \emph{functors}:
\begin{definition}[\bf Functor]
\label{def:functor}
A functor $F:\mathscr{C}\to\mathscr{D}$ consists of an object map $F:\ob(\mathscr{C})\to\ob(\mathscr{D})$, and, for each pair $A,B\in\ob(\mathscr{C})$, an assignment on arrows, $F:\mathscr{C}(A,B)\to\mathscr{D}(F(A),F(B))$, satisfying:
\begin{itemize}[noitemsep,nolistsep]
\vspace{-0.5em}
\item \emph{preservation of composition}: $F(g\circ f)=F(g)\circ F(f)$ whenever $A\xrightarrow{f}B\xrightarrow{g}C$ in $\mathscr{C}$,
\item \emph{preservation of identities}: $F(1_A)=1_{F(A)}$ for every $A\in\mathscr{C}$.
\end{itemize}
\end{definition}
\begin{example}
\label{example:implementation-functor}
    There is an \emph{implementation functor} $\mathcal{I} : \RT \to \Set$ mapping each theorem $T\in \Stmt$ to the singleton set $\{T\}$, while arrows are mapped to the unique functions between singletons.
\end{example}

\subsection{Rewriting categories}
\label{sec:rewriting_categories}

With the language of \Cref{sec:category_theory_preliminaries} in hand, we introduce the specific categorical objects we will use to reason about semantics-preserving rewrites of theorem statements.
Our goal is to define categories $\mathscr{R}$ with Lean statements as objects, and whose arrows encode their transformations induced by tactics.

\textbf{The maximal semantic rewriting category.}
Every $T\in\mathsf{Stmt}$ carries intended mathematical content, and we posit an abstract \emph{maximal semantic rewriting category} $\mathscr{R}^*$ whose objects are statements, and arrows are all mathematically-sound transformations (including, but not limited to Lean tactics).
Concretely, there is an arrow $a : T \rightarrow T'$ in $\mathscr{R}^*$ for any logically valid argument $a$ reducing $T$ to $T'$.

\textbf{Rewriting categories for Lean.}
In the general case, however, we do not have explicit access to $\mathscr{R}^*$ and hence consider subcategories of the tractable subcategory $\mathscr{R}_{\mathcal{T}}\subseteq\mathscr{R}^*$ from \Cref{example:category_of_lean}.
\begin{definition}\label{def:rewriting_category}
    A \emph{(Lean) rewriting category} is a subcategory $\mathscr{R}\subseteq \RT$ of $\RT$ with $\ob(\mathscr{R})=\Stmt$.
\end{definition}
Of particular interest are rewriting categories $\mathscr{R}$ that restrict the allowed set of tactics.
For a set $\mathcal{T}'\subseteq \mathcal{T}$, we say that $\mathscr{R}$ is \emph{generated by $\mathcal{T}'$} if $\mathscr{R}$ is the inclusion-wise minimal rewriting category, such that for all $T,T'\!\in\!\Stmt \text{ and }t\in\mathcal{T}'$ with $\mathcal{L}(T') = t(\mathcal{L}(T))$, there is an arrow $t : T\to T'$ in~$\mathscr{R}$.

Each such category encodes a specific, increasingly conservative notion of available transformations.
This hierarchy of subcategory approximations of $\mathscr{R}^*$ is used to scaffold our empirical study in \Cref{sec:methodology}, where different rule sets yield augmentations of varying semantic strength, and the sensitivity of a prover to the choice measures how far its behavior is from the invariance we desire.

With the rewriting category $\mathscr{R}_{\mathcal{T}}$ in place, and its restrictions $\mathscr{R} \subseteq \mathscr{R}_{\mathcal{T}}$, we are ready to define the structural biases expected from formal provers.
We begin with the notion of \emph{$\mathscr{R}$-equivariance of the proof distribution}, ensuring that the structures of solutions produced by the prover align with $\mathscr{R}$.

\subsection{Proof equivariance}
\label{sec:proof_equivariance}

If for a pair of statements $T, T' \in \Stmt$, a tactic $t \in \mathcal{T}$ transforms $\mathcal{L}(T)$ into $\mathcal{L}(T')$, then any proof $\mathbf{t}$ of $T'$ can be lifted to a proof of $T$ by concatenating $t$ at the beginning, i.e., $[t : \mathbf{t}]$.
This observation motivates an equivariance property for formal provers $p_\theta$. Formally, this corresponds to transporting distributions along arrows in the rewriting category.
Intuitively, the distribution $p_\theta([t: \cdot] \mid T)$ of completions of proofs of $T$ that start with $t$ should match the distribution of proofs for $T'$.
If a prover $p_\theta$ meets this condition, improving its performance on $T'$ would automatically translate to increased performance on $T$, for proofs that start with $t$.
To ensure that this distribution transport is well-defined and unambiguous, we will assume the following:

\begin{assumption}
\label{ass:non-zero-proof-chance}
    Denote by $\mathcal{T}^*$ the space of finite sequences of Lean tactics, and let $\DpT$ be the set of distributions over $\mathcal{T}^*$, assigning non-zero probability to each element $\mathbf{t}\in\mathcal{T}^*$.
    Let~$T\in\Stmt$. 
    Given $T$, $p_\theta$ has a non-zero probability of producing any proof $\mathbf{t}\in\mathcal{T}^*$, i.e. $p_\theta(\cdot\mid T) \in \DpT$.
\end{assumption}

\Cref{ass:non-zero-proof-chance} holds, in principle, for various classes of models, including LLMs, relying on logit-based next-token sampling.
The concept of proof transport can now be defined in terms of functors:

\begin{definition}[\bf Proof functor]\label{def:proof_functor}
The \emph{(stochastic) proof functor} on $\mathscr{R}$ is the functor $\mathcal{P}:\mathscr{R}\to\mathbf{Set}$ that maps each object $T\in \mathscr{R}$ to $\mathcal{P}(T) = \mathbb{D}_+(\mathcal{T}^*)$,
and whose action on arrows $\mathbf{t} : T \rightarrow T'$ is defined as:
\[
[\mathcal{P}(\mathbf{t})](\nu)(\mathbf{t}') \propto \nu([\mathbf{t}\!:\! \mathbf{t}'])\qquad \forall \nu \in \mathbb{D}_+(\mathcal{T}^*), \mathbf{t}'\in \mathcal{T}^* 
\]
\end{definition}
In words, for any arrow $\mathbf{t}:T\to T'$ in $\mathscr{R}$, $\mathcal{P}(\mathbf{t}) : \DpT \to \DpT$ is a function mapping each distribution $\nu \in \DpT$ to its restriction over proofs that start with $\bf t$.
The proof of functoriality of $\mathcal{P}$ is presented in Appendix \ref{sec:proof-functoriality}.
Using proof functors, we can formalize the property outlined above:
\begin{definition}[\bf $\mathscr{R}$-equivariance of proof distribution]\label{def:strong_consistency}
Let $\mathcal{P}:\mathscr{R}\to\mathbf{Set}$ be the proof functor.
A~prover $p_\theta$ is \emph{$\mathscr{R}$-equivariant in proof distribution} if
\[
p_\theta\left(\cdot \mid T'\right) = \mathcal{P}(\mathbf{t})(p_\theta(\cdot \mid T))
\qquad\text{for every arrow }\mathbf{t}:T\to T'\text{ in }\mathscr{R}.
\]
\end{definition}
Under \Cref{ass:non-zero-proof-chance}, this is a well-typed definition.
$\mathscr{R}$-equivariant provers, correspond precisely to \emph{natural transformations} $\mathcal{I}\to\mathcal{P}$, which we properly introduce and elaborate on in \Cref{sec:proof-equivariance-naturality}.
Importantly, a prover satisfying \Cref{def:strong_consistency} assigns \emph{structurally matched} proofs matching probabilities, allowing for generalization between any pair of problems linked by an arrow in $\mathscr{R}$.

\textbf{Proof equivariance in the literature.}
Although Lean proof equivariance has not been formalized previously, some approaches share closely related inductive biases.
\draft{In particular, next-tactic predictors~\citep{yang2023leandojo, li2024hunyuanprover, xin2025bfsprover} operate on intermediate proof states rather than theorem statements, and can satisfy $\mathscr{R}_{\mathcal{T}}$-equivariance.
Note, however, that this symmetry breaks when these models rely on a search under fixed budgets (Appendix~\ref{app:state_tactic_equivariance}).
Moreover, their multi-step training introduces optimization challenges, often resulting in weaker performance than single-pass provers.}
At the same time, incorporating such bias into whole-proof generation models would require severe architectural changes.

Consequently, in this work, we focus on an adjacent notion of bias for formal provers, defined by the invariance of the success probability across statements that share the same mathematical content. 

\subsection{Success invariance}
\label{sec:semantic_consistency}

Proof $\mathscr{R}$-equivariance, stated in \Cref{sec:proof_equivariance}, requires proof distributions to transfer according to the arrows of a rewriting category $\mathscr{R}$.
With the examples from \Cref{fig:gcd_example} in mind, a less restrictive condition could require the prover to achieve the same success rate on problems \emph{equivalent} w.r.t $\mathscr{R}$:
\begin{definition}[\bf $\mathscr{R}$-equivalent statements]\label{def:r-equivalence}
    We say that statements $T, T' \in \mathscr{R}$ are \emph{$\mathscr{R}$-equivalent}, denoted as $T \sim_\mathscr{R} T'$, if $T\leftrightarrows T'$ in $\mathscr{R}$, i.e. there exist some arrows $\mathbf{t} : T\to T'$ and $\mathbf{t'} : T'\to T$.  
\end{definition}

The motivation behind \Cref{def:r-equivalence} is the ability to transform valid proofs between $T$ and $T'$ via the~arrows of $\mathscr{R}$.
We encapsulate the ability of models to perform such transfer by requiring that the~probabilities of success of solving $\mathscr{R}$-equivalent statements are equal:
\begin{definition}[\bf Success invariance]\label{def:weak_consistency}
A prover $p_\theta$ is \emph{$\mathscr{R}$-invariant in success probability} if
\[
s_\theta(T) \;=\; s_\theta(T')\qquad\text{for every }T\sim_{\mathscr{R}} T'.
\]
\end{definition}
Setting $\mathscr{R}=\mathscr{R}^*$ recovers the informal notion of success invariance.
An $\mathscr{R}$-invariant prover respects the transformations assigned by $\mathscr{R}$ and guarantees that $\mathscr{R}$-equivalent problems are correctly solved with the same probability.
Therefore, improvement on solving a specific theorem $T$ directly translates to improvements over the whole equivalence class $[T]_\mathscr{R}$.

\textbf{A hierarchy of invariances.}
Under 
$\mathscr{R}^*\supseteq\mathscr{R}_{\mathcal{T}}\supseteq\mathscr{R}$ of \Cref{sec:rewriting_categories}, we obtain the implications
\[
\underbrace{\mathscr{R}^*\text{-invariance}}_{\text{ideal success invariance}}
\;\Longrightarrow\;
\mathscr{R}_{\mathcal{T}}\text{-invariance}
\;\Longrightarrow\;
\mathscr{R}\text{-invariance}
\]
This hierarchy formalizes progressively weaker but more tractable notions of invariance. Invariance with respect to the unreachable maximal category $\mathscr{R}^*$ is the strongest, capturing what we ideally want from a prover.
Restricting to a tractable subcategory $\mathscr{R}\subseteq\mathscr{R}_{\mathcal{T}}$ is a necessary condition that we can computationally evaluate.
Our methodology pursues the latter, and the observed variance can be seen as a lower bound on the error of $\mathscr{R}^*$-invariance.

\vspace{-0.1em}
\section{Test-time scaling with rewriting for success invariance}
\label{sec:ensembles}
\vspace{-0.2em}

\Cref{sec:semantic_consistency} formalizes success invariance of a prover $p_\theta$ with respect to a rewriting category $\mathscr{R}\subseteq\mathscr{R}_{\mathcal{T}}$.
We now address how to realize this property in practice.
Prior work \citep{tian2025evolprover} encourages invariance through training-time data augmentation, but is limited to a small set of transformations and requires retraining.
Instead, we seek a \emph{test-time} method that, given an already-trained $p_\theta$ that may not satisfy success invariance, produces another prover~$\bar{p}_\theta$ that is (approximately) $\mathscr{R}$-invariant, without retraining~$p_\theta$. Our construction is inspired by sampling and canonicalization strategies for group invariance \citep{kim2024revisiting, kim2026inverting, kaba2023equivariance, schmidt2024tilt, shumaylov2024lie, singhal2025test}.

\subsection{The rewriting ensemble}

We take the rewriting subcategory $\mathscr{R}\subseteq\mathscr{R}_{\mathcal{T}}$ from \Cref{sec:rewriting_categories} as fixed, together with a sampling procedure $\mu(\cdot\mid T, k)$ that, given a theorem statement $T\in\mathsf{Stmt}$ and a number $k$, returns $k$ theorems $T_1, \dots, T_k\in\mathsf{Stmt}$ reachable from $T$ via a finite chain of arrows in $\mathscr{R}$.
These variants approximate the equivalence class $[T]_\mathscr{R}$, and serve as alternative representations of the same underlying problem.

\begin{definition}[\bf Rewriting ensemble]\label{def:rewriting_ensemble}
Fix a prover $p_\theta$, budget $N\in\mathbb{N}$ and a variant count $K \leq N$. The~\emph{$(K,N)$-rewriting ensemble prover} $\bar{p}_{\theta,\mu}^{(K,N)}$, applied to a statement $T$, proceeds as follows:
\vspace{-0.5em}
\begin{enumerate}[noitemsep,nolistsep,label=(\arabic*)]
\item Sample $K$ variants $\{T_1,\ldots,T_K\}\:\sim\:\mu(\cdot\mid T, K)$, with the corresponding transitions $T \xrightarrow{\mathbf{r}_i} T_i$.
\item Run $p_\theta$ for $\lfloor N/K\rfloor$ independent attempts on each $T_i$.
\item Return any resulting valid proof, verified with Lean, lifted to a proof of $T$ via the arrow $T \xrightarrow{\mathbf{r}_i} T_i$ in $\mathscr{R}$; return failure if no attempt succeeds.
\end{enumerate}
\end{definition}
\vspace{-0.5em}
Assuming independence of attempts, the expected success probability of the ensemble is
\[
\bar{s}_{\theta,\mu}^{(K,N)}(T) \;=\; 1 - \mathbb{E}_{\{T_1,\ldots,T_K\}\:\sim\:\mu(\cdot\mid T, K)}\!\left[\,\prod_{i=1}^K \bigl(1-s_\theta(T_i)\bigr)^{\lfloor N/K\rfloor}\,\right]
\]
Two limiting cases are worth bearing in mind: $K=1$ collapses the ensemble to running $p_\theta$ on a single random variant, and $K=N$ spreads the budget one attempt per variant. Our analysis below concerns how $\bar{s}_{\theta,\mu}^{(K,N)}(T)$ depends on $K$, and how it compares across $T\sim_{\mathscr{R}}T'$.

\subsection{Invariance in the sample-size limit}
\label{sec:sample-limit-consistency}

For the ensemble prover to satisfy the $\mathscr{R}$-invariance of \Cref{def:weak_consistency}, the sampling distribution $\mu(\cdot\mid T, K)$ must exhibit behavior generalizable over the whole equivalence class $[T]_{\mathscr{R}}$.
This is not automatic, as following arrows from a statement $T$ can, in general, reach different statements than when starting in $T'\sim_{\mathscr{R}} T$.
We hence assume that $\mu$ has syntactically invariant support:

\begin{assumption}[\bf Sampling coverage]\label{ass:sampling_coverage} Let $T \sim_\mathscr{R} T'$ and let $T'' \in \mathscr{R}$ be a statement. Suppose that for some $K$, the probability $\mathbb{P}(T'' \!\in \mu(\cdot \mid T, K)) $ of sampling $T''$ is greater than $0$. Then: 
\[
\lim_{K\to\infty} \mathbb{P}(T'' \!\in \mu(\cdot \mid T, K)) = \lim_{K\to\infty} \mathbb{P}(T'' \!\in \mu(\cdot \mid T', K)) = 1
\]
\end{assumption}
\vspace{-0.3em}
In words, if $T''$ can be sampled for some input theorem $T$, as the number of samples tends to infinity, it will eventually be sampled for any member of the equivalence class $[T]_\mathscr{R}$.
\Cref{ass:sampling_coverage} holds in two cases of interest: (i) when $\mu(\cdot \mid T, K)$ eventually covers every state $T''$ reachable from $T$ in $\mathscr{R}$, and~(ii) when the sampler performs \emph{canonicalization} \citep{kaba2023equivariance} and outputs a fixed representative of $[T]_{\mathscr{R}}$.

\begin{proposition}[\bf Invariance in the sample-size limit]\label{prop:invariance_limit}
Let $n:\mathbb{N} \rightarrow \mathbb{N}$ be a function such that $\lfloor n(x)/x\rfloor\to\infty$ as $x \to\infty$. Then, under \Cref{ass:sampling_coverage}:
\vspace{-0.3em}
\[
\lim_{K\to\infty}\bar{s}_{\theta,\mu}^{(K,n(K))}(T) \;=\; \lim_{K\to\infty}\bar{s}_{\theta,\mu}^{(K,n(K))}(T')\qquad\text{for every }T\sim_{\mathscr{R}}T',
\]
\vspace{-0.3em}
so the limiting ensemble prover is $\mathscr{R}$-invariant in success probability (\Cref{def:weak_consistency}).
\end{proposition}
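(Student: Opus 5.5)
We will show that both limits exist and are equal to a common value that depends only on the equivalence class $[T]_{\mathscr{R}}$. Split into two cases, depending on whether the class contains a statement the base prover can solve with positive probability. Write $m(K)=\lfloor n(K)/K\rfloor$, so that $m(K)\to\infty$ by hypothesis. Let $S(T)$ be the \emph{reachable support}, i.e.\ the set of statements $T''$ with $\mathbb{P}(T''\in\mu(\cdot\mid T,K))>0$ for some $K$. Call $T''$ \emph{good} if $s_\theta(T'')>0$.

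\textbf{Case 1: some $T''\in S(T)$ is good.} Lower-bound the ensemble success on the event $E_K=\{T''\in\mu(\cdot\mid T,K)\}$. On $E_K$ the product $\prod_i(1-s_\theta(T_i))^{m(K)}$ is at most $(1-s_\theta(T''))^{m(K)}$, since every other factor lies in $[0,1]$. Off $E_K$ the product is at most $1$. Hence
\[
1-\bar{s}_{\theta,\mu}^{(K,n(K))}(T)\le (1-s_\theta(T''))^{m(K)}+\bigl(1-\mathbb{P}(E_K)\bigr).
\]
By \Cref{ass:sampling_coverage}, $\mathbb{P}(E_K)\to1$. Since $s_\theta(T'')>0$ and $m(K)\to\infty$, the first term also tends to $0$, so the limit for $T$ equals $1$. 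The same assumption, applied with $T$ and $T'$ in either order, gives $\mathbb{P}(T''\in\mu(\cdot\mid T',K))\to1$. The identical bound therefore shows the limit for $T'$ is also $1$.

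\textbf{Case 2: no statement in $S(T)$ is good.} Then almost surely every sampled variant has $s_\theta(T_i)=0$, so $\bar{s}^{(K,n(K))}(T)=0$ for all $K$. To conclude the same for $T'$, I need $S(T')$ to contain no good statement either. Take any $T''\in S(T')$. \Cref{ass:sampling_coverage}, applied with the roles of $T$ and $T'$ swapped (legitimate because $\sim_{\mathscr{R}}$ is symmetric), gives $\mathbb{P}(T''\in\mu(\cdot\mid T,K))\to1$. In particular $T''\in S(T)$, so $T''$ is not good, and thus the limit for $T'$ is also $0$. The same symmetric argument shows that Case 1 for $T$ holds exactly when Case 1 for $T'$ holds, so the two cases are consistent between $T$ and $T'$.

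\textbf{Main obstacle.} The delicate point is that in general $\lim_K \bar{s}$ need not exist. The plan resolves this by establishing the dichotomy $S(T)=S(T')$ via \Cref{ass:sampling_coverage} used in both directions. It then uses the fact that each limit is $0$ or $1$. This relies on $m(K)\to\infty$ together with the product bound, where every factor of the product is bounded by $1$.
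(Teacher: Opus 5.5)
Your proposal is correct and follows essentially the same route as the paper's proof. Both use the dichotomy on whether the sampler's support contains a statement $T''$ with $s_\theta(T'')>0$. Both use the same bound $1-\bar{s}\le(1-s_\theta(T''))^{\lfloor n(K)/K\rfloor}+\bigl(1-\mathbb{P}(T''\in\mu(\cdot\mid T,K))\bigr)$ in the first case. Both transfer the zero case by applying \Cref{ass:sampling_coverage} with $T$ and $T'$ swapped. Your version is slightly cleaner: by bounding $1-\bar{s}$ directly, you get existence of the limit for free, whereas the paper manipulates limits before knowing they exist.
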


Informally, averaging over enough variants reachable from the full equivalence class washes out the dependence on the particular initial theorem statement.
The proof is provided in \Cref{sec:proof-invariance-limit}.

\subsection{Monotonicity in the number of rewritings}
\vspace{-0.5em}
We now characterize the optimal number of variants of the input statement $T$, for reciprocal $\mathscr{R}$, regardless of the choice of sampling protocol $\mu$.
Assuming an equal prior on $s_\theta(T')$ for all $T'\sim _\mathscr{R} T$, spreading the budget across more versions is never worse, in expectation, than committing to fewer:

\begin{proposition}[\bf Monotonicity]\label{prop:monotonicity}
Let $\mathscr{R}$ be reciprocal and $T\in\Stmt$ be a statement.
Assume that for any $T' \sim _\mathscr{R} T$, the success probability $s_\theta(T') \in [0,1]$ is an independent random variable sampled from some (prior) distribution $P_{[T]_\mathscr{R}}$. Then, for any budget $N$ and divisors $K\le K'$ of $N$,
\vspace{-0.3em}
\[
\bar{s}_{\theta,\mu}^{(K,N)}(T) \;\le\; \bar{s}_{\theta,\mu}^{(K',N)}(T),
\]
\vspace{-0.2em}
with strict inequality whenever $K\!<\! K'$ and $s_\theta$ is non-constant on the support of $\mu(\cdot\mid T, K) \subseteq [T]_{\mathscr{R}}$.
\end{proposition}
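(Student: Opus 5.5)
Under the stated prior, the variants $T_1,\dots,T_K$ returned by $\mu(\cdot\mid T,K)$ all lie in $[T]_{\mathscr{R}}$. Since $\mathscr{R}$ is reciprocal, every statement reachable from $T$ is $\mathscr{R}$-equivalent to $T$. I read the hypothesis as follows: conditionally on the sampled set, the success probabilities $X_i := s_\theta(T_i)$ are i.i.d. with law $\PTR$, and they are independent of which statements were drawn. This is where the independence assumption and the common prior are used. With $m=N/K$ and $m'=N/K'$, so that $Km=K'm'=N$, the ensemble formula gives
\[
1-\bar{s}_{\theta,\mu}^{(K,N)}(T) \;=\; \mathbb{E}\Bigl[\prod_{i=1}^K (1-X_i)^{m}\Bigr] \;=\; \bigl(\mathbb{E}[(1-X)^{m}]\bigr)^{K},
\]
and analogously with $K',m'$. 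The claim therefore reduces to the scalar inequality $\bigl(\mathbb{E}[Y^{m}]\bigr)^{K}\ge \bigl(\mathbb{E}[Y^{m'}]\bigr)^{K'}$, where $Y=1-X\in[0,1]$.

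Next I rewrite both sides as norms. Raising to the power $1/N$, the inequality becomes $\bigl(\mathbb{E}[Y^{m}]\bigr)^{1/m}\ge \bigl(\mathbb{E}[Y^{m'}]\bigr)^{1/m'}$, because $K/N=1/m$ and $K'/N=1/m'$. Both sides are $L^p$ norms of $Y$ under a probability measure, namely $\|Y\|_m$ and $\|Y\|_{m'}$. Since $K\le K'$ gives $m\ge m'$, the inequality is exactly the monotonicity of $p\mapsto\|Y\|_p$ on a probability space. That monotonicity follows from Jensen applied to the convex map $u\mapsto u^{m/m'}$ on $Y^{m'}$: $\mathbb{E}[Y^{m}]=\mathbb{E}[(Y^{m'})^{m/m'}]\ge(\mathbb{E}[Y^{m'}])^{m/m'}$. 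Raising to the power $K$ and using $Km=K'm'$ gives the desired non-strict inequality directly, without the $1/N$ normalisation. Complementing yields $\bar{s}^{(K,N)}\le\bar{s}^{(K',N)}$.

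For strictness, suppose $K<K'$. Then $m>m'$, so $u\mapsto u^{m/m'}$ is strictly convex, and Jensen is strict unless $Y^{m'}$, and hence $X$, is almost surely constant. Non-constancy of $s_\theta$ on the support of $\mu$ is precisely what rules out $X$ being a.s. constant. Degenerate cases such as $\mathbb{E}[Y^{m'}]=0$ force $Y\equiv0$ a.s., which again means $X$ is constant, so they are excluded as well. Finally, raising $a>b\ge0$ to the power $K$ preserves the strict inequality.

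I expect the main obstacle to be the modelling step rather than the inequality. One has to make rigorous that the expectation over $\mu$ and over the prior factorises into $\bigl(\mathbb{E}[(1-X)^m]\bigr)^K$. This requires the $K$ sampled variants to be distinct, or at least to carry independent success probabilities, and the law of each $X_i$ to be $\PTR$ regardless of how $\mu$ selects. If $\mu$ could return repeated statements, I would either assume distinct variants or note that repeats only reduce the number of independent factors. Reciprocity is used only to guarantee that all reachable variants lie in $[T]_{\mathscr{R}}$, so that the common prior applies to them.
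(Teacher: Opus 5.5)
Your proof is correct and follows the paper's route. Independence under the common prior $\PTR$ reduces both sides to $1-\mathbb{E}\bigl[(1-S)^{N/K}\bigr]^{K}$, and the comparison then rests on the moment-norm (Lyapunov) inequality: you derive it via strict Jensen for $u\mapsto u^{m/m'}$, the paper via H\"older against the constant $1$ (Lemma~\ref{lem:holder_x}), with the same equality case $X$ a.s.\ constant. Your caveat that the factorisation needs distinct, independently scored variants is one the paper leaves implicit.
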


\begin{corollary}
In particular, $\bar{s}_{\theta,\mu}^{(K,N)}(T)\ge\bar{s}_{\theta,\mu}^{(1,N)}(T)$ for every $K\geq 1$ dividing $N$.
\end{corollary}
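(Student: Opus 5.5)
The corollary is the special case $K=1$ of \Cref{prop:monotonicity}, so the plan is to invoke that proposition with $K=1$ and $K'=K$. The only hypotheses to check are the standing ones: $\mathscr{R}$ is reciprocal, and the success probabilities $s_\theta(T')$ for $T'\sim_\mathscr{R} T$ are i.i.d.\ draws from $P_{[T]_\mathscr{R}}$. We also need $1$ and $K$ to be divisors of $N$ with $1\le K$. Since $1$ divides every $N$ and $K\ge 1$ is assumed to divide $N$, the pair $(1,K)$ satisfies the requirement on the divisors $K\le K'$ of $N$. The proposition then gives $\bar{s}_{\theta,\mu}^{(1,N)}(T)\le \bar{s}_{\theta,\mu}^{(K,N)}(T)$ directly.

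It helps to note what the comparison means. Under the i.i.d.\ prior, the ensemble success probability depends only on how the budget is split. Write $q = 1 - s_\theta(T_i)$ and $m = N/K$. Then
\[
1-\bar{s}_{\theta,\mu}^{(K,N)}(T)=\mathbb{E}\bigl[q^{m}\bigr]^{K},
\qquad
1-\bar{s}_{\theta,\mu}^{(1,N)}(T)=\mathbb{E}\bigl[q^{N}\bigr]=\mathbb{E}\bigl[q^{mK}\bigr].
\]
The inequality therefore reduces to $\mathbb{E}[q^m]^K\le\mathbb{E}[(q^m)^K]$. This is Jensen's inequality for the convex map $x\mapsto x^K$ on $[0,1]$, with equality exactly when $q^m$ is almost surely constant. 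So the corollary can also be checked directly, independent of the general monotonicity argument.

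There is one point to handle carefully. The expression $\mathbb{E}[q^m]^K$ uses independence across the $K$ sampled variants, which requires that they are distinct or are assigned independent priors. Since \Cref{prop:monotonicity} is assumed as given, this is already covered by its hypotheses. The case $K=1$ is trivial equality, so nothing further is needed.
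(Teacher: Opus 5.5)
Your proposal is correct and follows the paper's route: the corollary is \Cref{prop:monotonicity} applied to the divisor pair $(1,K)$, and your direct Jensen check $\mathbb{E}[q^m]^K\le\mathbb{E}[q^{mK}]$ is the $K=1$ instance of the H\"older (Lyapunov) inequality in \Cref{lem:holder_x}. One small slip: your equality condition ``exactly when $q^m$ is a.s.\ constant'' relies on strict convexity of $x\mapsto x^K$, so it holds only for $K\ge 2$; this is harmless, since you treat $K=1$ separately as trivial equality.
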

\vspace{-0.6em}
The proof of \Cref{prop:monotonicity} is given in \Cref{sec:proof-monotonicity}.
Note that whenever $s_\theta(T)>0$, repeated sampling \citep{brown2024large} without rewriting also achieves $\bar{s} \to 1$ as $N \to \infty$, a property the ensemble does not improve on asymptotically.
The value of the rewriting ensemble is therefore at a finite budget $N$. Our empirical results in \Cref{sec:results} instantiate this comparison at budgets of practical interest.
\vspace{-0.4em}
\section{Methodology}
\label{sec:methodology}
\vspace{-0.3em}
Motivated by the sensitivity of LLM-based formal provers to specific formulations of theorems, and the guarantees from \Cref{sec:ensembles},
we design our experiments to study the following questions:
\begin{enumerate}[label=\textbf{(Q\arabic*)},leftmargin=30pt,noitemsep,nolistsep]
    \item How far from success invariance are the modern LLM-based single-pass formal provers?
    \item Can rewriting ensembles improve their robustness, and even increase performance? 
\end{enumerate}

We introduce the \emph{miniF2F-rw} dataset, consisting of verified, $\mathscr{R}_\mathcal{T}$-equivalent rewritings of problems from miniF2F \citep{zheng2022miniff}
and use it to test the robustness of leading open-source models \citep{ren2025deepseek,lin2025goedel,lin2025goedel_v2,kimina_prover_2025}.
We then confirm the beneficial impact of rewriting ensembles in two settings: \emph{controlled}, using the verified variants from miniF2F-rw, and \emph{test-time}, producing reformulations at inference time.

\draft{\textbf{Online augmentations.}
Augmenting formal statements is substantially harder than query rewriting in natural language tasks \citep{ma2023query,zhou2023leasttomost,obrien2024improving} due to strict syntactic constraints (Appendix \ref{sec:challenges_of_statement_reformulation}). To address this, we devise a three-stage test-time rewriting algorithm, outlined below and detailed in Appendix \ref{sec:sampling-mechanism}:}
\begin{enumerate}[noitemsep,nolistsep,label=(\arabic*),leftmargin=30pt]
    \item \textbf{Sampling.} Rewrites are drawn from the rewriting category $\mathscr{R}$ generated by non-conditional lemmas from Mathlib \citep{mathlib2020} supplemented with a small set of simplification tactics. 
    \item \textbf{Scoring.} Each sampled variant is ranked by a model-based energy function \citep{kim2026inverting}, based on the model's surprise when processing the input theorem, which acts as a cheap proxy for $s_\theta$.
    \item \textbf{Selection.} The top-scoring variants, together with the seed, are passed to the prover.
\end{enumerate}

\textbf{Benchmarks.}
We propose miniF2F-rw benchmark, built on top of miniF2F~\citep{zheng2022miniff}.
Each of the $488$ original problems is paired with $5$-$15$ rewrites, each with a Lean certificate of semantic equivalence, yielding roughly $5700$ statements in total.
All variants were additionally manually filtered to remove ones that changed mathematical content.
For more details, see Appendix~\ref{sec:minif2f-generation}.
To further demonstrate the potential of test-time ensembles, we use ProofNet \citep{azerbayev2023proofnet}, and Ineq-Comp \citep{zhao2025ineq} containing a `trans' split, designed specifically to test robustness to transformations of its `seed' problems.

\textbf{Models.}
Following \citet{tian2025evolprover}, we focus on leading open-source non-reasoning models in our main experiments: DeepSeek-Prover-V2 \citep{ren2025deepseek}, Goedel-Prover-DPO \cite{lin2025goedel}, and Goedel-Prover-SFT \citep{lin2025goedel}.
Reasoning provers produce thousands of tokens per proof \citep[Table 3]{ren2025deepseek}, which is prohibitive here.
We include smaller-scale evaluations of the robustness of chosen reasoning models in Appendix \ref{app:reasoning_experiments}.

\textbf{Evaluation.}
On miniF2F-rw, we compare each prover in four configurations, under fixed budget $k$:
\begin{itemize}[noitemsep,nolistsep,nosep]
    \item \textbf{seed}: all $k$ attempts on the original statement (corresponds to miniF2F benchmark performance).
    \item \textbf{random}: all $k$ attempts on a single variant, drawn uniformly from the pool in miniF2F-rw.
    \item \textbf{controlled}: $8$ variants drawn uniformly from the pool; budget split to $\frac{k}{8}$ attempts per variant.
    \item \textbf{test-time}: seed + $7$ variants sampled online; budget split to $\frac{k}{8}$ attempts per variant.
\end{itemize}

We report PASS@$k$ at three budgets, $k \in \{8, 32, 64\}$.
Each variant is attempted $n=64$ times.
As the results for \textbf{random} and \textbf{controlled} depend on the drawn variants, we sample 20,000 variant selections and report the mean and standard deviation.
An ablation on the ensemble size is in Appendix \ref{app:ablation_number_of_ensembles}.

We further evaluate \textbf{seed} and \textbf{test-time} mode for chosen provers on ProofNet \citep{azerbayev2023proofnet} and Ineq-Comp~\citep{zhao2025ineq}.
For the latter, in addition to PASS@$k$, we also measure the ratio of PASS@$k$ results on the `trans' and `seed' splits, displaying provers' generalizability to problems with similar proof structures \citep{zhao2025ineq, tian2025evolprover}.
We compare their results against EvolProver \cite{tian2025evolprover}, which proposed training data augmentations for implicitly learning generalization over equivalent formulations during training time.
\vspace{-0.5em}
\section{Evaluation results}
\label{sec:results}
\vspace{-0.5em}

\begin{table}
    \caption{PASS@$k$ results of selected LLM provers on the miniF2F-rw benchmark.
    The best result for each prover on each split is highlighted as \textbf{bold}, and the second-best is \underline{underlined}.}
    \label{tab:minif2f-rw-results}
    \scriptsize
    \centering
    \setlength{\tabcolsep}{3pt}
    
\begin{tabular}{ll@{\hspace{12pt}}lll@{\hspace{12pt}}lll}
\toprule
 & & \multicolumn{3}{c}{\textbf{miniF2F-rw-valid}} & \multicolumn{3}{c}{\textbf{miniF2F-rw-test}} \\
\midrule
\textbf{Model} & \textbf{Variant} & $k=8$ & $k=32$ & $k=64$ & $k=8$ & $k=32$ & $k=64$ \\
\midrule
\multirow{4}{*}{Goedel-Prover-SFT \cite{lin2025goedel}}
& seed & $\bm{57.7}$ & $\bm{62.8}$ & $\bm{64.8}$ & $\bm{51.7}$ & $\bm{56.5}$ & $\bm{57.8}$ \\
& random & $51.6{\scriptstyle \pm0.9}$ & $58.2{\scriptstyle \pm0.9}$ & $60.9{\scriptstyle \pm1.1}$ & $47.4{\scriptstyle \pm0.8}$ & $53.3{\scriptstyle \pm0.9}$ & $55.2{\scriptstyle \pm1.0}$ \\
& controlled & $54.7{\scriptstyle \pm0.2}$ & $60.8{\scriptstyle \pm0.2}$ & $63.4{\scriptstyle \pm0.2}$ & $50.0{\scriptstyle \pm0.1}$ & $55.7{\scriptstyle \pm0.1}$ & $57.4{\scriptstyle \pm0.2}$ \\
& test-time & $\underline{56.3}$ & $\underline{62.6}$ & $\underline{63.6}$ & $\underline{50.5}$ & $\underline{56.0}$ & $\underline{57.6}$ \\

\midrule
\multirow{4}{*}{Goedel-Prover-DPO \cite{lin2025goedel}}
& seed & $\bm{58.6}$ & $63.1$ & $64.3$ & $\bm{52.9}$ & $\underline{57.7}$ & $\underline{59.0}$ \\
& random & $53.9{\scriptstyle \pm1.0}$ & $59.7{\scriptstyle \pm1.0}$ & $61.6{\scriptstyle \pm1.2}$ & $49.5{\scriptstyle \pm0.9}$ & $54.7{\scriptstyle \pm1.0}$ & $56.4{\scriptstyle \pm1.1}$ \\
& controlled & $\underline{57.7{\scriptstyle \pm0.2}}$ & $\underline{63.4{\scriptstyle \pm0.2}}$ & $\bm{65.4{\scriptstyle \pm0.3}}$ & $\underline{52.6{\scriptstyle \pm0.2}}$ & $\bm{57.9{\scriptstyle \pm0.2}}$ & $\bm{59.9{\scriptstyle \pm0.2}}$ \\
& test-time & $\underline{57.7}$ & $\bm{63.5}$ & $\underline{65.2}$ & $49.1$ & $55.5$ & $57.1$ \\

\midrule
\multirow{4}{*}{DeepSeek-Prover-V2-7B \cite{ren2025deepseek}}
& seed & $\underline{78.4}$ & $79.1$ & $79.5$ & $61.7$ & $64.6$ & $66.8$ \\
& random & $71.1{\scriptstyle \pm1.1}$ & $74.1{\scriptstyle \pm1.1}$ & $75.3{\scriptstyle \pm1.2}$ & $59.7{\scriptstyle \pm0.9}$ & $63.2{\scriptstyle \pm1.0}$ & $64.8{\scriptstyle \pm1.1}$ \\
& controlled & $77.2{\scriptstyle \pm0.3}$ & $\underline{79.7{\scriptstyle \pm0.3}}$ & $\underline{80.8{\scriptstyle \pm0.4}}$ & $\bm{64.0{\scriptstyle \pm0.3}}$ & $\bm{67.3{\scriptstyle \pm0.3}}$ & $\bm{68.8{\scriptstyle \pm0.3}}$  \\
& test-time & $\bm{79.7}$ & $\bm{81.1}$ & $\bm{81.5}$ & $\underline{62.6}$ & $\underline{66.0}$ & $\underline{67.6}$ \\
\bottomrule
\end{tabular}

\end{table}
\begin{table*}[t]
    \vspace{-0.5em}
    \caption{The comparison between test-time ensembles and baselines on the considered datasets.}
    \label{tab:test-time-results-merged}
    \vspace{0.5em}
    \centering
    \tiny
    \setlength{\tabcolsep}{4pt}
    \begin{tabular}{lcccccc}
    \toprule
    \textbf{Model} & \textbf{Budget} & \textbf{ProofNet-test} & \textbf{Ineq-Comp-seed} & \textbf{Ineq-Comp-trans} & \textbf{Ineq-Comp-ratio} \\
    \midrule
    EvolProver \cite{tian2025evolprover} & 32 &  -- & $52.2$ & $34.0$ & $65.2$ \\  
    \midrule
    \multirow{2}{*}{Goedel-Prover-DPO \cite{lin2025goedel}}
        & 32 &  $13.6$ & $40.5$ & $12.3$ & $30.4\%$ \\
        & 64 &  $14.5$ & $44.0$ & $15.3$ & $34.8\%$ \\
    \midrule
    \multirow{2}{*}{Goedel-Prover-DPO + Ensemble}
        & 32 &  $15.6$ & $42.3$ & $14.5$ & $34.3\%$ \\
        & 64 &  $16.8$ & $46.6$ & $17.0$ & $36.5\%$ \\
    \midrule
    \multirow{2}{*}{DeepSeek-Prover-V2-7B \cite{ren2025deepseek}}
        & 32 &  $21.9$ & $64.8$ & $34.8$ & $53.7\%$ \\
        & 64 &  $23.1$ & $\bm{66.7}$ & $37.3$ & $55.9\%$ \\
    \midrule
    \multirow{2}{*}{DeepSeek-Prover-V2-7B + Ensemble}
        & 32 &  $21.9$ & $63.4$ & $40.8$ & $64.4\%$ \\
        & 64 &  $\bm{23.2}$ & $64.7$ & $\bm{45.5}$ & $\bm{70.3}\%$\\
    \bottomrule
    \end{tabular}
    \vspace{-1.5em}
\end{table*}
\begin{figure}[t]
    \vspace{-2em}
    \centering
%
%
%

\definecolor{exMini}{HTML}{6F4DB0}       
\definecolor{exProof}{HTML}{3567BC}      
\definecolor{exIneq}{HTML}{358352}       
\definecolor{exDivider}{HTML}{C8CFD9}    
\definecolor{exSucc}{HTML}{2A9C46}       
\definecolor{exFail}{HTML}{B8332E}       
\definecolor{exTacLabel}{HTML}{4F5763}   

\newcommand{\exBadge}[2]{%
    \tikz[baseline=(b.base)]{%
        \node[
            draw=#1!75, fill=#1!10, rounded corners=4pt, line width=0.4pt,
            inner xsep=4pt, inner ysep=1.2pt,
            font=\sffamily\fontsize{6.5}{7.5}\selectfont\bfseries,
            text=#1!90!black
        ] (b) {#2};%
    }%
}

\newcommand{\codefont}{\ttfamily\fontsize{6.5}{7.5}\selectfont}

\newcommand{\exBody}[4]{%
    \begin{minipage}[t][16mm][t]{\linewidth}%
        {\sffamily\tiny\bfseries\color{#1}#2}\hfill #3\\[3pt]%
        {\codefont #4}%
    \end{minipage}%
}

\newcommand{\exTacInner}[2]{%
    \begin{minipage}[t][5mm][t]{\linewidth}%
        {\color{#1}\rule[0.1ex]{4pt}{4pt}}\hspace{4pt}%
        \vspace{-1pt}
        {\sffamily\tiny\bfseries\color{exTacLabel}Tactic used:}\\%
        {\codefont #2}%
    \end{minipage}%
}

\newcommand{\exHeadInner}[2]{%
    \makebox[\linewidth][c]{%
        \rule[-1pt]{0pt}{11pt}%
        \sffamily\bfseries\small\color{#1}#2%
    }%
}

\newcommand{\exDashRule}{%
    \par\addvspace{3pt}%
    \noindent\hbox to \linewidth{%
        \color{exDivider}%
        \xleaders\hbox to 4pt{\hss\vrule height 0.4pt depth 0pt width 2.5pt\hss}\hfill%
    }%
    \par\addvspace{-5pt}%
}

\resizebox{\linewidth}{!}{%
\begin{tikzpicture}[
    font=\small,
    wMid/.style={text width=0.293\linewidth},  
    wNar/.style={text width=0.260\linewidth},  
    wWid/.style={text width=0.435\linewidth},  
    chipBase/.style={
        rounded corners=4pt, line width=0.5pt,
        inner xsep=6pt, inner ysep=4pt,
        align=center, anchor=north west,
    },
    miniHdr/.style ={chipBase, draw=exMini!55,  fill=exMini!13},
    proofHdr/.style={chipBase, draw=exProof!55, fill=exProof!13},
    ineqHdr/.style ={chipBase, draw=exIneq!55,  fill=exIneq!13},
    cardBase/.style={
        rounded corners=4pt, line width=0.5pt,
        fill=white, inner sep=5pt, align=left,
        anchor=north west,
    },
    miniCard/.style ={cardBase, draw=exMini!55},
    proofCard/.style={cardBase, draw=exProof!55},
    ineqCard/.style ={cardBase, draw=exIneq!55},
    tacBase/.style={
        rounded corners=4pt, line width=0.4pt,
        inner sep=5pt, align=left, anchor=north west,
    },
    miniTac/.style ={tacBase, draw=exMini!55,  fill=white},
    proofTac/.style={tacBase, draw=exProof!55, fill=white},
    ineqTac/.style ={tacBase, draw=exIneq!55,  fill=white},
]
    \node[miniHdr, wMid] (mh) at (0,0)
        {\exHeadInner{exMini}{miniF2F}};
    \node[miniCard, wMid, below=2pt of mh.south, anchor=north] (mc) {%
        \exBody{exMini}{Seed}{\exBadge{exFail}{0/64 SR}}{%
            \lkw{theorem} mathd\_algebra\_362 (a b : \lty{$\mathbb{R}$})\\[-0.35em]
            \hspace*{1em}(h$_0$ : a \textasciicircum\: 2 * b \textasciicircum\: 3 = 32 / 27)\\[-0.35em]
            \hspace*{1em}(h$_1$ : a / b \textasciicircum\: 3 = 27 / 4)\\[-0.35em]
            \hspace*{1em}: a + b = 8 / 3 \lkw{:= by}}%
        \exDashRule
        \exBody{exMini}{Augmented}{\exBadge{exSucc}{2/16 SR}}{%
            \lkw{theorem} mathd\_algebra\_362 (a b : \lty{$\mathbb{R}$})\\[-0.35em]
            \hspace*{1em}(h$_0$ : a \textasciicircum\: 2 * b \textasciicircum\: 3 = 32 / 27)\\[-0.35em]
            \hspace*{1em}(h$_1$ : b \textasciicircum\: 3 / a = 4 / 27)\\[-0.35em]
            \hspace*{1em}: a + b = 8 / 3 \lkw{:= by}}%
    };
    \node[miniTac, wMid, below=2pt of mc.south west, anchor=north west] (mt)
        {\exTacInner{exMini}{\ltac{rw} [div\_eq\_div\_iff\_comm] \lkw{at} h$_1$}};

    \node[proofHdr, wNar, right=1pt of mh.north east, anchor=north west] (ph)
        {\exHeadInner{exProof}{ProofNet}};
    \node[proofCard, wNar, below=2pt of ph.south, anchor=north] (pc) {%
        \exBody{exProof}{Seed}{\exBadge{exFail}{0/64 SR}}{%
            \lkw{theorem} exercise\_1\_1\_17\\[-0.35em]
            \hspace*{1em}\{G : \lty{Type*}\} [\lty{Group} G] \{x : G\}\\[-0.35em]
            \hspace*{1em}\{n : \lty{$\mathbb{N}$}\} (hxn : orderOf x = n)\\[-0.35em]
            \hspace*{1em}: x\textasciicircum{-1} = x\textasciicircum (n-1 : \lty{$\mathbb{Z}$}) \lkw{:= by}}%
        \exDashRule
        \exBody{exProof}{Augmented}{\exBadge{exSucc}{16/16 SR}}{%
            \lkw{theorem} exercise\_1\_1\_17\\[-0.35em]
            \hspace*{1em}\{G : \lty{Type*}\} [\lty{Group} G] \{x : G\}\\[-0.35em]
            \hspace*{1em}\{n : \lty{$\mathbb{N}$}\} (hxn : orderOf x = n)\\[-0.35em]
            \hspace*{1em}: x\textasciicircum{-1} = x\textasciicircum n * x\textasciicircum{-1} \lkw{:= by}}%
    };
    \node[proofTac, wNar, below=2pt of pc.south west, anchor=north west] (pt)
        {\exTacInner{exProof}{\ltac{rw} [zpow\_sub\_one, zpow\_coe\_nat]}};

    \node[ineqHdr, wWid, right=1pt of ph.north east, anchor=north west] (ih)
        {\exHeadInner{exIneq}{Ineq-Comp}};
    \node[ineqCard, wWid, below=2pt of ih.south, anchor=north] (ic) {%
        \exBody{exIneq}{Seed}{\exBadge{exFail}{0/64 SR}}{%
            \lkw{theorem} cauchy\_p19 (x y z : \lty{$\mathbb{R}$})\\[-0.35em]
            \hspace*{1em}(hx : x > 0) (hy : y > 0) (hz : z > 0)\\[-0.35em]
            \hspace*{1em}(h : 1/(1 + x\textasciicircum 2) + 1/(1 + y\textasciicircum 2) + 1/(1 + z\textasciicircum 2) = 2)\\[-0.35em]
            \hspace*{1em}: x\textasciicircum 2 + y\textasciicircum 2 + z\textasciicircum 2 + 3 $\ge$ (x + y + z)\textasciicircum 2 \lkw{:= by}}%
        \exDashRule
        \exBody{exIneq}{Augmented}{\exBadge{exSucc}{8/16 SR}}{%
            \lkw{theorem} cauchy\_p19 (x y z : \lty{$\mathbb{R}$})\\[-0.35em]
            \hspace*{1em}(hx : 0 < x) (hy : 0 < y) (hz : 0 < z)\\[-0.35em]
            \hspace*{1em}(h : (1 + x\textasciicircum 2)$^{-1}$ + (1 + y\textasciicircum 2)$^{-1}$ + (1 + z\textasciicircum 2)$^{-1}$ = 2)\\[-0.35em]
            \hspace*{1em}: (x + y + z)\textasciicircum 2 $\le$ x\textasciicircum 2 + y\textasciicircum 2 + z\textasciicircum 2 + 3 \lkw{:= by}}%
    };
    \node[ineqTac, wWid, below=2pt of ic.south west, anchor=north west] (it)
        {\exTacInner{exIneq}{\ltac{norm\_num} \lkw{at} *}};
\end{tikzpicture}%
}
\vspace{-1em}
    \vspace{-0.5em}
    \caption{Examples of problems solved by the test-time ensemble framework, where seed failed. }
    \label{fig:examples_of_solved_rewrites}
    \vspace{-0.7em}
\end{figure}

\textbf{Success invariance.} 
Results on miniF2F-rw (\Cref{tab:minif2f-rw-results}) reveal a consistent drop in performance under \textbf{random} perturbations relative to the \textbf{seed} baseline \emph{for all tested models}, confirming their sensitivity to input formulation.
Interestingly, this degradation is substantially larger on the validation split.
For example, at PASS@$8$, DeepSeek-Prover-V2 drops by $7.3$ points on the validation split, compared to only $2.0$ on test.
This reveals its dependence on surface structures witnessed during model selection.

Rewriting ensembles effectively mitigate this gap.
Both \textbf{controlled} and \textbf{test-time} variants significantly outperform single random rewrites (e.g., +$4\%$ PASS@64 for DeepSeek on test), while additionally reducing variance.
Notably, despite their bias toward original statements, both Goedel-Prover-DPO and DeepSeek-Prover-V2 benefit from ensembling, surpassing the \textbf{seed} baseline.
This empirically validates the theoretical advantages outlined in \Cref{sec:ensembles}.
Remarkably, DeepSeek-Prover-V2's ensemble performance is competitive with or surpasses reasoning at matched token counts (Appendix~\ref{app:minif2f-tokens}).

\textbf{Test-time ensembling.}
We compare \textbf{seed} baselines and \textbf{test-time} ensembles across benchmarks (\Cref{tab:test-time-results-merged}), using EvolProver as a retraining-based reference. Despite models' bias toward seed formulations in ProofNet-test and Ineq-Comp-seed, test-time ensembling consistently yields competitive or improved performance over the baselines.
On the main split of interest, Ineq-Comp-trans, rewriting ensemble increases PASS@64 of DeepSeek-Prover-V2 from 37.3\% to 45.5\%, establishing a new state of the art among non-reasoning models.
The resulting robustness ratio is competitive with EvolProver ($65.2\%$ vs $64.4\%$ at PASS@$32$).
This shows that our framework can efficiently reduce sensitivity of underlying provers to surface forms of problems, without retraining.

\draft{Across all datasets, we observe cases where DeepSeek-Prover fails on the original formulation yet consistently solves rewritten variants (\Cref{fig:examples_of_solved_rewrites}).
In several examples, rewriting increases the success rate from $0/64$ on seed to $8/16$, or even $16/16$, on the reformulation.
These results show that our method discovers non-trivial rewrites that substantially improve solvability across diverse domains.}
\vspace{-0.5em}
\section{Related work}
\label{sec:related_work}
\vspace{-0.5em}

\textbf{Statement augmentations.}
To evaluate the generalization of Lean provers across related problems, \citet{zhao2025ineq} introduces Ineq-Comp, a benchmark of inequality questions and their algebraic transformations.
While not strictly equivalent under our definition, these variants share the proof structure, hence testing compositional reasoning.
To address the observed robustness gap, EvolProver \citep{tian2025evolprover} augments training data with fixed abstract syntax tree (AST) transformations, encoding common symmetries. Our approach improves consistency through test-time augmentation, without retraining.
FormalEvolve \citep{formalevolve} showed that enforcing the generation of diverse reformulations of formal statements benefits autoformalization \cite{zhang2026drift, cabral2026proofflow}; we extend this insight to LLM-based proof generation.

\textbf{Equivalence of formal statements.}
Determining equivalence between formal theorems is a core challenge in autoformalization, the task of mapping natural language problems to formal representations.
Reliable evaluation requires accurately verifying equivalence with the ground truth.
BEq \citep{liu2025beq}, defines equivalence via reducibility: $T$ is equivalent to $T'$ if $T$ can be proved~using~$T'$, and vice versa.
In contrast, we adopt a perspective based on co-reachability in rewriting categories.

\textbf{Structural representations in formal theorem proving.}
Prior work has explored explicit structural encodings of formal statements \citep{paliwal2020graph,li2021neurotactic,blaauwbroek2024graph2tac}, yet typically representing them as AST-based graphs processed with graph neural networks.
While these approaches naturally capture symmetries such as variable renaming and commutativity, they have struggled to scale to harder problems compared to LLM-based methods.
To bridge this gap, Algebraic Positional Encodings (APE) \citep{kokos2024representations} incorporate algebraic structure, e.g., of binary trees, directly into the attention mechanisms. In \cite{kokos2024learning}, they were applied to theorem proving by encoding inputs as ASTs and training models to exploit their structure.

\textbf{Symmetries in deep learning.}
Symmetry is usually modeled as group actions~\citep{bronstein2021geometric} and incorporated into neural networks either via weight sharing \citep{pmlr-v70-ravanbakhsh17a, kondor2018generalizationequivarianceconvolutionneural}, data augmentation \citep{chen2020grouptheoreticframeworkdataaugmentation, lyle2020benefitsinvarianceneuralnetworks}, or test-time methods \citep{kim2024revisiting, kim2026inverting, kaba2023equivariance, schmidt2024tilt, shumaylov2024lie, singhal2025test}.
Prior works generalized to categories via monad algebras \citep{gavranovic2024position} or naturality \citep{maruyama2025categorical}, although focusing on theorizing weight sharing.
\citet{bottou2024conceptual} posited that rewrite rules of natural language~\citep{harris1968mathematical, harris1991theory} may form a groupoid, a category with all arrows invertible, though not necessarily preserving semantics.
We extend these views, modeling semantics-preserving rewrites with a category, using naturality to formulate invariance and equivariance, and realizing invariance at test time.
\vspace{-0.5em}
\section{Summary, limitations, and future work}
\vspace{-0.5em}
\label{sec:summary}
We frame Lean theorem proving in a category-theoretic framework, modeling statement augmentations as arrows in rewriting categories.
This perspective yields two symmetry principles for LLM-based provers: success invariance and proof equivariance.
We show that state-of-the-art provers violate these properties, exhibiting strong sensitivity to equivalent reformulations.
To address this, we propose a test-time ensembling method that approximates invariance for arbitrary provers.
We provide theoretical guarantees and show improved robustness and performance across benchmarks.

\draft{We restrict our experiments with ensembles to non-reasoning provers.
Although we expect the qualitative picture to carry over to reasoning models,
whether the \emph{magnitude} of the achieved marginal gains differs is left as an open question.
Another natural extension is to increase the ensemble expressivity, which currently uses a model-based energy function as a lightweight proxy for~$s_\theta$.
A~selector trained to predict proof success, or an adaptive bandit allocating compute across variants at inference, are both possible next steps.
A further idea is to incorporate our symmetry framework into model design, e.g., by extending APE \cite{kokos2024representations} to categorical structures.
Lastly, outside the mathematical domain, rewriting categories could be used to model symmetries of formal languages in programming and logic \citep{pei2023exploiting, chang2026dynamics} (see \Cref{app:formal_languages}), as well as certain aspects of natural language \citep{irpan2025consistency, berglund2023reversal, kusner2017counterfactual, presposition}.}
\section*{Acknowledgments}

The authors would like to acknowledge the use of the University of Oxford Advanced Research Computing (ARC) facility in carrying out this work. \href{https://doi.org/10.5281/zenodo.22558}{https://doi.org/10.5281/zenodo.22558}.
We also would also like to thank Rob Cornish for helpful discussions.

\bibliographystyle{unsrtnat}
\bibliography{references}


\appendix

\section{Background on category theory}
\label[appendix]{app:extended_category}

Our formalization uses a small amount of elementary category theory.
We leave a minimal introduction here, and refer the reader to \citet{leinster2014basic} for further background.

\subsection{Categories}
Categories provide a natural abstraction of local, non-invertible transformations, just as groups provide a natural abstraction of context-independent (i.e.\ global) and invertible symmetries.

\textbf{\Cref{def:category}}
A \emph{category} $\mathscr{C}$ consists of
\begin{itemize}[noitemsep,nolistsep]
\item a collection of \emph{objects} $\ob(\mathscr{C})$,
\item for each $A,B\in\ob(\mathscr{C})$, a collection $\mathscr{C}(A,B)$ of \emph{maps} or \emph{arrows} from $A$ to $B$,
\item for each pair of arrows $f\in\mathscr{C}(A,B)$ and $g\in \mathscr{C}(B,C)$, their \emph{composition} $g\circ f\in\mathscr{C}(A,C)$,
\item for each object $A\in \ob(\mathscr{C})$, an element $1_A$ of $\mathscr{C}(A,A)$, called the \emph{identity} on $A$,
\end{itemize}
satisfying the following axioms:
\begin{itemize}[noitemsep,nolistsep]
\item \emph{associativity}: $(h\circ g)\circ f= h\circ(g\circ f)$ for any $f\in \mathscr{C}(A,B)$, $g\in \mathscr{C}(B,C)$ and $h\in \mathscr{C}(C,D)$,
\item \emph{identity laws}: for each $f\in \mathscr{C}(A,B)$, we have $f\circ1_A = f = 1_B\circ f$.
\end{itemize}

\textbf{\Cref{example:category_of_sets}}
There is a category $\mathbf{Set}$ whose objects are sets, whose maps are ordinary functions, with function composition as composition and identity functions as identities. 

Given \Cref{example:category_of_sets} above, it may be tempting to think of maps as functions, but the definition only requires that they compose associatively with identities:

\begin{example}\label{example:group_as_category}
Any group $G$ is a category $\mathbf{B}G$ with a single object $\bullet$ and arrows $\mathbf{B}G(\bullet,\bullet)=G$, where composition is the group operation and the identity arrow is the group identity. The invertibility of group elements translates to the statement that every arrow of $\mathbf{B}G$ is an isomorphism.
\end{example}

\subsection{Functors}

To relate categories to one another and, in particular, to let an abstract category act on concrete data, we use structure-preserving maps between categories, called \emph{functors}.

\textbf{\Cref{def:functor}}
A functor $F:\mathscr{C}\to\mathscr{D}$ consists of an object map $F:\ob(\mathscr{C})\to\ob(\mathscr{D})$, and, for each pair $A,B\in\ob(\mathscr{C})$, an assignment on arrows, $F:\mathscr{C}(A,B)\to\mathscr{D}(F(A),F(B))$,
satisfying:
\begin{itemize}[noitemsep,nolistsep]
\item \emph{preservation of composition}: $F(g\circ f)=F(g)\circ F(f)$ whenever $A\xrightarrow{f}B\xrightarrow{g}C$ in $\mathscr{C}$,
\item \emph{preservation of identities}: $F(1_A)=1_{F(A)}$ for every $A\in\mathscr{C}$.
\end{itemize}

The target $\mathscr{D}$ may differ substantially from the source $\mathscr{C}$, but the compositional structure of $\mathscr{C}$ is reflected in its image under $F$.
Of particular importance in our setting are functors into $\mathbf{Set}$, which model how the abstract transformations in $\mathscr{C}$ \emph{act on} concrete data.
We illustrate this with groups:
\begin{example}\label{example:group_action_as_functor}
A group action of $G$ on a set $X$ is exactly a functor $F:\mathbf{B}G\to\mathbf{Set}$ sending the unique object $\bullet$ to $X$ and each group element $g\in G$ to a function $F(g):X\to X$.
The functor axioms unfold to the familiar identities $F(gh)=F(g)\circ F(h)$ and $F(1)=1_X$, which are precisely the defining axioms of a group action.
\end{example}

We will also find useful a simple \emph{implementation functor}, mapping theorems from rewriting categories to the corresponding singleton sets:

\textbf{\Cref{example:implementation-functor}.}
    There is an implementation functor $\mathcal{I} : \RT \to \Set$ mapping each theorem $T\in \Stmt$ to the singleton set $\{T\}$, while arrows are mapped to the unique functions between singletons.

\subsection{Natural transformations}

We now introduce natural transformations, the categorical analogue of invariant and equivariant maps.
Just as an equivariant map is defined between two $G$-sets (sets equipped with an action of a group $G$), a natural transformation is defined between two functors sharing a common source and target.
\begin{definition}\label{def:natural_transformation}
Let $F,G:\mathscr{C}\to\mathscr{D}$ be two functors. A natural transformation $\alpha:F\to G$ is an object-indexed family of arrows $(\alpha_A:F(A)\to G(A))_{A\in\ob(\mathscr{C})}$ in $\mathscr{D}$ such that, for every arrow $f:A\to B$ in $\mathscr{C}$, the following \emph{naturality square} commutes:
\[
\begin{tikzcd}
F(A) \arrow[r, "F(f)"] \arrow[d, "\alpha_A"'] & F(B) \arrow[d, "\alpha_B"] \\
G(A) \arrow[r, "G(f)"'] & G(B)
\end{tikzcd}
\]
Equivalently, $\alpha_B\circ F(f)=G(f)\circ\alpha_A$ for every arrow $f:A\to B$.
\end{definition}
The commutativity of this square, called \emph{naturality}, says that the family $\alpha$ is compatible with the transformations encoded by $\mathscr{C}$: transporting along $\mathscr{C}$ and then applying $\alpha$ yields the same result as applying $\alpha$ and then transporting along the corresponding arrow of $\mathscr{D}$.
This compatibility is exactly the content of invariance and equivariance in our setting.
\begin{example}\label{example:group_equivariance}
Taking $\mathscr{C}=\mathbf{B}G$, a natural transformation $\alpha:F\to H$ between two group actions $F,H:\mathbf{B}G\to\mathbf{Set}$ reduces to a single map $\alpha_\bullet:F(\bullet)\to H(\bullet)$ whose naturality square, instantiated at each $g\in G$, becomes the familiar equivariance condition $\alpha_\bullet(g\cdot x)=g\cdot\alpha_\bullet(x)$.
When $H$ acts trivially on the target (i.e.\ $H(g)=1\:\:\forall g\in G$), this reduces further to $\alpha(g\cdot x)=\alpha(x)$, recovering invariance.
\end{example}
The example also illustrates the global character of group transformations: because $\mathbf{B}G$ has a single object, a natural transformation collapses to one map $\alpha_\bullet$ that applies uniformly.
In a category with many objects, such as the rewriting categories (introduced in \Cref{sec:rewriting_categories}), each object $A$ carries its own component $\alpha_A$, and naturality is what coordinates them across the arrows of $\mathscr{C}$. This is how the framework captures local, context-dependent transformations central to our setting.

\section{Extended formalization of rewriting categories}

\subsection{Proof equivariance as natural transformations}
\label[appendix]{sec:proof-equivariance-naturality}

Let $\mathscr{R}$ be a rewriting category. Recall the definition of the proof functor on $\mathscr{R}$ from \Cref{sec:proof_equivariance}:

\textbf{\Cref{def:proof_functor}.}
The \emph{(stochastic) proof functor} on $\mathscr{R}$ is the functor $\mathcal{P}:\mathscr{R}\to\mathbf{Set}$ that maps each object $T\in \mathscr{R}$ to $\mathcal{P}(T) = \mathbb{D}_+(\mathcal{T}^*)$,
and whose action on arrows $\mathbf{t} : T \rightarrow T'$ is defined as:
\[
[\mathcal{P}(\mathbf{t})](\nu)(\mathbf{t}') \propto \nu([\mathbf{t}\!:\! \mathbf{t}'])\qquad \forall \nu \in \mathbb{D}_+(\mathcal{T}^*), \mathbf{t}'\in \mathcal{T}^* 
\]

We then formulate an inductive bias for formal theorem provers in terms of the proof functor $\mathcal{P}$, requiring that their proof distributions can be transported over arrows of $\mathscr{R}$ by the action of $\mathcal{P}$:

\textbf{\Cref{def:strong_consistency}.}
Let $\mathcal{P}:\mathscr{R}\to\mathbf{Set}$ be the proof functor.
A prover $p_\theta$ is \emph{$\mathscr{R}$-equivariant in proof distribution} (with respect to $\mathcal{P}$) if
\[
p_\theta\left(\cdot \mid T'\right) = \mathcal{P}(\mathbf{t})(p_\theta(\cdot \mid T))
\qquad\text{for every arrow }\mathbf{t}:T\to T'\text{ in }\mathscr{R}.
\]

For \Cref{def:strong_consistency} to be well-defined, as $\mathcal{P}(\mathbf{t})$ is a function $\DpT\!\to\!\DpT$, we need to assume that $p_\theta(\cdot\mid T)\in\DpT$ for every theorem statement $T\in\mathscr{R}$.
This is not a radical simplification, as whole-proof generation LLMs, which rely on logits for next token generation, satisfy this property.

\textbf{\Cref{ass:non-zero-proof-chance}.}
    For any $T\in\Stmt$, the prover $p_\theta$ satisfies $p_\theta(\cdot\mid T) \in \DpT$.

Let $\mathcal{I}$ be the {implementation functor} for $\mathscr{R}$.
We can treat $p_\theta$ as a family $((p_\theta)_T)_{T\in\mathscr{R}}$ of functions $(p_\theta)_T: \{T\} \to \DpT$ mapping $((p_\theta)_T)(T) = p_\theta(\cdot \mid T)$. With this perspective, let $\mathbf{t}: T \to T'$ be an arrow in $\mathscr{R}$. We can rewrite the sides of the equality from \Cref{def:strong_consistency} as:
\[
p_\theta(\cdot\mid T') = ((p_\theta)_{T'})(T') = ((p_\theta)_{T'})(\mathcal{I}(\mathbf{t})(T)) = ((p_\theta)_{T'} \circ \mathcal{I}(\mathbf{t}))(T)  
\]
Similarly,
\[
\mathcal{P}(\mathbf{t})(p_\theta(\cdot \mid T)) = \mathcal{P}(\mathbf{t})(((p_\theta)_T)(T)) = (\mathcal{P}(\mathbf{t})\circ (p_\theta)_T)(T)
\]
which allows us to rephrase the condition for $\mathscr{R}$-equivariance in proof distribution as:
\[
((p_\theta)_{T'} \circ \mathcal{I}(\mathbf{t}))(T) = (\mathcal{P}(\mathbf{t})\circ (p_\theta)_T)(T)
\qquad\text{for every arrow }\mathbf{t}:T\to T'\text{ in }\mathscr{R}.
\]
Both functions $(p_\theta)_{T'} \circ \mathcal{I}(\mathbf{t})$ and $\mathcal{P}(\mathbf{t})\circ (p_\theta)_T$ are of the type $\{T\}\to\DpT$, so them being equal on the argument $T$ translates to the equality of the functions themselves. As a result, we can simplify the requirement above into:
\[
((p_\theta)_{T'} \circ \mathcal{I}(\mathbf{t})) = (\mathcal{P}(\mathbf{t})\circ (p_\theta)_T)
\qquad\text{for every arrow }\mathbf{t}:T\to T'\text{ in }\mathscr{R}.
\]
But this is precisely the definition of $p_\theta$ being a natural transformation $\mathcal{I}\to \mathcal{P}$.

In conclusion, we have shown that both proposed notions of inductive bias for Lean-based formal theorem provers can be precisely characterized as naturality conditions, and hence as natural transformations.
This formulation extends the perspective of geometric deep learning \citep{bronstein2021geometric}, where structural inductive biases are captured via invariance and equivariance under group actions, to a categorical setting.
In this broader context, operating over functors $\mathscr{R} \to \Set$, the appropriate analogue of such equivariance (and also invariance, as we show in the next section) is given by naturality.
A similar formulation has been proposed by \citep{maruyama2025categorical}, but without being grounded in concrete applications.
We find that categories are a natural fit for modeling semantics-preserving rewritings in formal languages.

\subsection{Success invariance as natural transformations}
\label[appendix]{app:success_invariance}

Recall that given a rewriting category $\mathscr{R}$, we define the notion of prover $\mathscr{R}$-invariance as follows: 

\textbf{\Cref{def:r-equivalence}.}
    We say that statements $T, T' \in \mathscr{R}$ are \emph{$\mathscr{R}$-equivalent}, denoted as $T \sim_\mathscr{R} T'$, if $T\leftrightarrows T'$ in $\mathscr{R}$, i.e. there exist some arrows $\mathbf{t} : T\to T'$ and $\mathbf{t'} : T'\to T$.  

\textbf{\Cref{def:weak_consistency}.}
A prover $p_\theta$ is \emph{$\mathscr{R}$-invariant in success probability} if
\[
s_\theta(T) \;=\; s_\theta(T')\qquad\text{for every }T\sim_{\mathscr{R}} T'.
\]

For the reciprocal $\mathscr{R}$, we can equivalently formalize this definition in terms of natural transformations.

Let us introduce a functor $\mathcal{J}_\mathscr{R}:\mathscr{R} \to \Set$ that maps each object $T\in \mathscr{R}$ to the interval $[0,1]\subset \mathbb{R}$,
and every arrow to the identity function $1_{[0,1]}$.
Consider the implementation functor $\mathcal{I} : \mathscr{R} \to \Set$.

For any $T\in \mathscr{R}$, we have $\mathcal{I}(T) = \{T\}$ and $\mathcal{J}(T)=[0,1]$.
This creates a perfect setting to define the success probability map $s_\theta$ as a family of functions $\left(\mathcal{I}(T) \to \mathcal{J}(T)\right)_{T\in\mathscr{R}}$.
Concretely, for a given prover $p_\theta$, consider functions $\mathcal{S}^\theta_T : \{T\}\to[0,1]$ given by $\mathcal{S}_T^\theta(T) = s_\theta(T)$.
By the observation above, each of these functions is of type $\mathcal{S}_T^\theta : \mathcal{I}(T) \to \mathcal{J}(T)$.
This gives rise to a family $\mathcal{S}^\theta$ of functions $(\mathcal{S}_T^\theta : \{T\} \to [0,1])_{T\in \ob(\mathscr{R})}$.
For reciprocal $\mathscr{R}$, naturality of $\mathcal{S}^\theta$ coincides with $\mathscr{R}$-invariance of $p_\theta$: 

\begin{claim}
    When $\mathscr{R}$ is reciprocal, $\mathcal{S}^\theta$ is a natural transformation $\mathcal{I}\to\mathcal{J}$ if and only if the prover $p_\theta$ satisfies Definition \ref{def:weak_consistency}.
\end{claim}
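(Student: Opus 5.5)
The plan is to unfold the naturality square of \Cref{def:natural_transformation} for $\mathcal{S}^\theta:\mathcal{I}\to\mathcal{J}_\mathscr{R}$ at an arbitrary arrow $\mathbf{t}:T\to T'$ of $\mathscr{R}$. The condition to check is
\[
\mathcal{S}^\theta_{T'}\circ\mathcal{I}(\mathbf{t}) \;=\; \mathcal{J}_\mathscr{R}(\mathbf{t})\circ\mathcal{S}^\theta_T
\]
as functions $\{T\}\to[0,1]$. Since $\{T\}$ is a singleton, this equality of functions amounts to equality at the single point $T$. The same reduction was used in \Cref{sec:proof-equivariance-naturality}.

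For the left side, $\mathcal{I}(\mathbf{t})$ is the unique map $\{T\}\to\{T'\}$, so it sends $T\mapsto T'$, and the left side evaluates to $s_\theta(T')$. For the right side, $\mathcal{J}_\mathscr{R}(\mathbf{t})=1_{[0,1]}$, so it evaluates to $s_\theta(T)$. Hence $\mathcal{S}^\theta$ is natural if and only if
\[
s_\theta(T)=s_\theta(T')\quad\text{for every arrow } T\to T' \text{ in } \mathscr{R}.
\]

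It remains to show that this arrow-wise condition is equivalent to \Cref{def:weak_consistency}, which is where reciprocity enters.

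\emph{Naturality implies invariance.} Suppose $T\sim_\mathscr{R} T'$. Then in particular there is an arrow $T\to T'$, and naturality gives $s_\theta(T)=s_\theta(T')$. Reciprocity is not needed for this direction.

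\emph{Invariance implies naturality.} Take any arrow $\mathbf{t}:T\to T'$. Because $\mathscr{R}$ is reciprocal, there is also some arrow $T'\to T$, so $T\leftrightarrows T'$, i.e.\ $T\sim_\mathscr{R} T'$ by \Cref{def:r-equivalence}. Success invariance then gives $s_\theta(T)=s_\theta(T')$, which is exactly the naturality square at $\mathbf{t}$.

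The only substantive point is this use of reciprocity. Without it, a one-directional arrow $T\to T'$ would impose a constraint under naturality that invariance does not impose, and the two notions could differ. I will remark on this briefly. Everything else is bookkeeping with singleton sets and identity maps. I will also note that $\mathcal{J}_\mathscr{R}$ is trivially a functor, since identities compose to identities, so $\mathcal{S}^\theta:\mathcal{I}\to\mathcal{J}_\mathscr{R}$ is well-typed.
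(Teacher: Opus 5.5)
Your proposal is correct and follows essentially the same route as the paper. Both unfold the naturality square at an arbitrary arrow $\mathbf{t}:T\to T'$, reduce equality of the two maps $\{T\}\to[0,1]$ to evaluation at $T$ (giving $s_\theta(T')=s_\theta(T)$), and use reciprocity to identify ``there is an arrow $T\to T'$'' with $T\sim_\mathscr{R}T'$. Splitting into two directions makes explicit that reciprocity is needed only for invariance $\Rightarrow$ naturality; this is slightly finer bookkeeping than the paper's single equivalence chain and is consistent with its remark on the non-reciprocal case.
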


\begin{proof}
    Recall that in a reciprocal category $\mathscr{R}$, $T \sim_\mathscr{R} T'$ if and only if there is some arrow $T \to T'$.
    Let $\mathbf{t} : T \to T'$ be an arrow in $\mathscr{R}$.  
    The naturality condition from \Cref{def:natural_transformation} requires:
    \[
    \mathcal{S}^\theta_{T'} \circ \mathcal{I}(\mathbf{t}) = \mathcal{J}(\mathbf{t})\circ \mathcal{S}^\theta_T
    \]
    Both these functions are from $\mathcal{I}(T)=\{T\}$ to $\mathcal{J}(T')=[0,1]$, so the functions above are equal if and only if they act equally on $T$.
    Finally, unfolding their definitions, we get:
    \[
    \begin{aligned}
    \left(\mathcal{S}^\theta_{T'} \circ \mathcal{I}(\mathbf{t})\right)(T) &= \mathcal{S}^\theta_{T'}(T') = s_\theta(T')\\
    \left(\mathcal{J}(\mathbf{t})\circ \mathcal{S}^\theta_T\right)(T) &= \mathcal{J}(\mathbf{t})(s_\theta(T)) = s_\theta(T)\: 
    \end{aligned}
    \]
    Combining the observations listed above, we can derive the following equivalence chain: 
    \[
    \begin{aligned}
        p_\theta \text{ is } \mathscr{R}\text{-invariant}
        &\iff s_\theta(T)=s_\theta(T') \quad\text{for all } T\sim_\mathscr{R} T' \\
        &\iff s_\theta(T)=s_\theta(T') \quad\text{whenever there is an edge } \mathbf{t} : T \to T'\\
        &\iff \left(\mathcal{S}^\theta_{T'} \circ \mathcal{I}(\mathbf{t})\right)(T) = \left(\mathcal{J}(\mathbf{t})\circ \mathcal{S}^\theta_T\right)(T) \quad\text{whenever}\ \mathbf{t} : T \to T' \text{ is in }\mathscr{R}\\
        &\iff \mathcal{S}^\theta_{T'} \circ \mathcal{I}(\mathbf{t}) = \mathcal{J}(\mathbf{t})\circ \mathcal{S}^\theta_T \quad\text{whenever } \mathbf{t} : T \to T' \text{ is an arrow in }\mathscr{R}\\
        &\iff \mathcal{S}^\theta \text{ is a natural transformation } \mathcal{I}\to\mathcal{J}
    \end{aligned}
    \]
\end{proof}

\begin{remark}
    More generally, even when $\mathscr{R}$ is not reciprocal, $\mathcal{S}^\theta$ defines a natural transformation $\mathcal{I}\to\mathcal{J}$ provided $s_\theta(T)=s_\theta(T')$ for any statements $T,T'$ connected by an arrow in $\mathscr{R}$.
    Although this condition does not imply equivalence, an appropriate choice of generating tactics for $\mathscr{R}$ can still induce useful structure in an $\mathscr{R}$-invariant prover.
    For instance, if the generating set includes tactics invoking external solvers (e.g., \leaninline{omega}), $\mathscr{R}$-invariance may correspond to the model’s ability to reduce statements to forms automatically closed by such tactics.
\end{remark}

\subsection{Structural biases of next-tactic prediction models}
\label[appendix]{app:state_tactic_equivariance}

In this work, we study whole-proof LLM-based provers, which take a Lean theorem statement as input and autoregressively generate a complete proof in a single pass.
This approach benefits from efficient inference in modern LLMs, but requires committing to an entire reasoning trajectory without intermediate feedback from the Lean compiler.

\draft{An alternative approach formulates theorem proving as search over Lean states, with tactics as actions. Next-tactic predictors~\citep{yang2023leandojo, li2024hunyuanprover, xin2025bfsprover} condition on the current proof state and predict the next tactic, hence defininig a conditional distribution $m_\theta(\cdot \mid \mathcal{L}(T))$.
They are typically integrated into best-first~\citep{li2024hunyuanprover, xin2025bfsprover} or Monte Carlo tree search~\citep{xin2025deepseekproverv} frameworks, transforming $m_\theta$ into a multi-step prover $M_\theta(\cdot\mid T)$.
From the perspective of inductive biases, of particular interest are \emph{sequential} multi-step provers, corresponding to the special case of beam size $1$.
We will call a multi-step prover $M_\theta$ \emph{sequential}, if it repeatedly samples a tactic, applies it to the current state, and continues from the resulting state.
The probability of generating a proof $\mathbf{t} = (t_1,\dots,t_n)\in\mathcal{T}^*$ is then}
\[
M_\theta(\mathbf{t} \mid T) = \prod_{i=1}^n m_\theta(t_i \mid \mathbf{t}_{<i}(\mathcal{L}(T)))
\]
By operating on states, these frameworks can implicitly capture the proposed inductive biases:

\textbf{$\RT$ proof equivariance.} Let $T\xrightarrow{\bf{t}} T'$ be an arrow in $\RT$. This corresponds to the relation between induced states: $\mathcal{L}(T') = \mathbf{t}(\mathcal{L}(T))$.
Let $\mathcal{P}:\RT\to\mathbf{Set}$ be the proof functor. Suppose $M_\theta$ is \draft{a sequential multi-step prover. Recall that we would call $M_\theta$ $\RT$-equivariant in proof distribution if:
\[
M_\theta\left(\cdot \mid T'\right) = \mathcal{P}(\mathbf{t})(M_\theta(\cdot \mid T))
\qquad\text{for every arrow }\mathbf{t}:T\to T'\text{ in }\mathscr{R}.
\]
Now, let us unwrap the substance of the right-hand side of the equation above.
$\mathcal{P}(\mathbf{t})$ is a function that takes a distribution over proofs $\mathcal{T}^*$ and returns the induced marginal distribution of proofs that start with $\bf t$.
Therefore, for any $\mathbf{t'}\in\mathcal{T}^*$, we have:
\[
\begin{aligned}
\mathcal{P}(\mathbf{t})(M_\theta(\cdot \mid T))(\mathbf{t'}) = \frac{M_\theta([\mathbf{t}:\mathbf{t}']\mid T)}{M_\theta(\mathbf{t}\mid T)} &=  
\frac{\prod_{t'_i\in[\mathbf{t}:\mathbf{t}']}^{|[\mathbf{t}:\mathbf{t}']|} m_\theta(t'_i \mid [\mathbf{t}:\mathbf{t}']_{<i}(\mathcal{L}(T)))}{\prod_{t_i\in\mathbf{t}}^{|\mathbf{t}|} m_\theta(t_i \mid \mathbf{t}_{<i}(\mathcal{L}(T)))} \\
&= \prod_{t'_i\in\mathbf{t}'}^{|\mathbf{t}'|} m_\theta(t'_i \mid \mathbf{t'}_{<i}(\mathbf{t}(\mathcal{L}(T)))) \\
&= \prod_{t'_i\in\mathbf{t}'}^{|\mathbf{t}'|} m_\theta(t'_i \mid \mathbf{t'}_{<i}(\mathcal{L}(T'))) \\&= M_\theta(\mathbf{t}' \mid T')
\end{aligned}
\]
Hence, $M_\theta$ is $\mathscr{R}$-equivariant in proof distribution.
In general, however, when combined with explicit search, this symmetry is lost due to the finite sampling budgets -- a state reached once during the search may never be explored, leading to a non-transferable distribution of proofs.
}

\textbf{$\mathscr{R}_\iota$-invariance of success probability.}
\draft{The inductive biases of multi-step provers become even clearer when considering success invariance (\Cref{sec:semantic_consistency}).
Let $\mathscr{R}_\iota$ denote the rewriting category generated solely by the empty tactic sequence $\iota \in \mathcal{T}^*$.
Every arrow in $\mathscr{R}_\iota$ is then of the form $T \xrightarrow{\iota} T'$ and satisfies $\mathcal{L}(T') = \iota(\mathcal{L}(T)) = \mathcal{L}(T)$.
Hence, whenever $T \leftrightarrows T'$ in $\mathscr{R}_\iota$, both statements compile to the same initial proof state.
As a result, $M_\theta$ performs search from an identical starting point on both inputs, implying equal success probabilities.
Therefore, $M_\theta$ is $\mathscr{R}_\iota$-invariant in success probability. Notably, this argument does not require $M_\theta$ to be sequential.}

\begin{remark}
Next-tactic prediction introduces a more complex training objective, making optimization challenging and leading to weaker empirical performance.
A promising direction for future work is to combine these paradigms by integrating state-conditioned reasoning into single-pass approaches.
\end{remark}

\subsection{Capturing compositional behavior in Lean proofs}
\label[appendix]{app:additional_remarks}

In this work, we consider rewriting categories $\mathscr{R}$ in which each tactic sequence induces a distinct arrow.
Concretely, for arrows $\mathbf{t}, \mathbf{t}' : T \to T'$ in $\mathscr{R}$, if the underlying tactic sequences $\mathbf{t}$ and $\mathbf{t}'$ are syntactically distinct, then the corresponding arrows are treated as distinct.
Nevertheless, the formalization of the Lean statement--tactic space introduced in \Cref{sec:formalization_lean_space} can capture richer structural properties of this space, requiring provers to reason about the underlying structure of tactics rather than their syntactic form alone.

\begin{wrapfigure}{r}{0.36\textwidth}
\vspace{-1em}
    \centering

\definecolor{compBoxHdr}{HTML}{FFD7A6}
\definecolor{compBoxHdrText}{HTML}{8A4B08}
\definecolor{compBoxDraw}{HTML}{D88A2D}

\resizebox{0.85\linewidth}{!}{%
\begin{tikzpicture}[
    font=\sffamily\normalsize,
    tacticbox/.style={
        rectangle split, rectangle split parts=2,
        rectangle split part fill={compBoxHdr, white},
        rectangle split draw splits=true,
        rounded corners=3pt, draw=compBoxDraw, line width=0.4pt,
        fill=white,
        text width=5.7cm, inner xsep=6pt, inner ysep=4pt,
        align=left,
        font=\ttfamily\footnotesize,
        text=black,
    },
]
    \node[tacticbox] (t1) at (0,0) {%
        {\sffamily\bfseries\footnotesize\color{compBoxHdrText}Tactic $t_1$}
        \nodepart{two}%
        rw [Nat.sub\_add\_eq]};

    \node[tacticbox, below=0.25cm of t1] (t2) {%
        {\sffamily\bfseries\footnotesize\color{compBoxHdrText}Tactic $t_2$}
        \nodepart{two}%
        rw [Nat.sub\_add\_cancel]};

    \node[tacticbox, below=0.25cm of t2] (t12) {%
        {\sffamily\bfseries\footnotesize\color{compBoxHdrText}Tactic $[t_1 : t_2]$}
        \nodepart{two}%
        rw [Nat.sub\_add\_eq]\\
        rw [Nat.sub\_add\_cancel]};

    \node[tacticbox, below=0.25cm of t12] (t3) {%
        {\sffamily\bfseries\footnotesize\color{compBoxHdrText}Tactic $t_3$}
        \nodepart{two}%
        rw [Nat.sub\_add\_eq, Nat.sub\_add\_cancel]};
\end{tikzpicture}%
}
    \caption{Composition $[t_1 : t_2]$ has the same effect as $t_3$, despite them being syntactically different.}
    \label{fig:tactic_composition}

    \vspace{4em}
    \definecolor{compBoxHdr}{HTML}{F8C8C8}
\definecolor{compBoxHdrText}{HTML}{8B1E1E}
\definecolor{compBoxDraw}{HTML}{C44E4E}

\resizebox{0.85\linewidth}{!}{%
\begin{tikzpicture}[
    font=\sffamily\normalsize,
    tacticbox/.style={
        rectangle split, rectangle split parts=2,
        rectangle split part fill={compBoxHdr, white},
        rectangle split draw splits=true,
        rounded corners=3pt, draw=compBoxDraw, line width=0.4pt,
        fill=white,
        text width=5.7cm, inner xsep=6pt, inner ysep=4pt,
        align=left,
        font=\ttfamily\footnotesize,
        text=black,
    },
]
    \node[tacticbox] (t1) at (0,0) {%
        {\sffamily\bfseries\footnotesize\color{compBoxHdrText}Tactic $l_1$}
        \nodepart{two}%
        have h$_0$ : x * 3 $\equiv$ 1 [ZMOD 7] := by \\
        \hspace*{1.2em}norm\_num
        };

    \node[tacticbox, below=0.25cm of t1] (t2) {%
        {\sffamily\bfseries\footnotesize\color{compBoxHdrText}Tactic $l_2$}
        \nodepart{two}%
        have h$_0$ : x * 3 $\equiv$ 1 [ZMOD 7] := by \\
        \hspace*{1.2em}simpa [hx]
        };
\end{tikzpicture}%
}
    \caption{Tactics $l_1$ and $l_2$ may act differently on some statements.}
    \label{fig:lemma_equivalence}
    \vspace{-1em}
\end{wrapfigure}%

\draft{Consider the tactics $t_1$, $t_2$, and $t_3$ shown in \Cref{fig:tactic_composition}.
Tactics $t_1$ and $t_2$ rewrite the current proof state using the Mathlib lemmas \leaninline{Nat.sub_add_eq} and \leaninline{Nat.sub_add_cancel}, respectively, while $t_3$ applies both these rewriting rules, within a single tactic.}

\draft{Hence, the sequential composition $[t_1 : t_2]$ has the same effect as applying $t_3$ directly.
Although both induce the same transformation function $d : \Omega \to \Omega$ on Lean states, the concatenation $[t_1 : t_2]$ and $t_3$ remain syntactically distinct, as shown in \Cref{fig:tactic_composition}.}
A categorical formulation allows these transformations to be identified in $\mathscr{R}$ by requiring that, for any objects $T,T' \in \mathscr{R}$, the morphisms $[t_1 : t_2] : T \to T'$ and $t_3 : T \to T'$ coincide, without imposing any equivalence between $t_3$ and either $t_1$ or $t_2$ individually. This notion of compositional equivalence cannot be naturally expressed using graph- or group-based representations.

The example in \Cref{fig:tactic_composition} concerns tactic sequences that are globally equivalent as functions $\Omega \to \Omega$. Category-theoretic structure also enables a finer notion of local equivalence. Consider the tactics $l_1$ and $l_2$ from \Cref{fig:lemma_equivalence}. Both introduce the auxiliary statement that $3x \equiv 1 \pmod 7$, but construct the proof differently. Consequently, they are not extensionally equivalent: for example, $l_2$ requires the presence of a hypothesis \leaninline{hx} in the proof state. Nevertheless, categorical structure allows these transformations to be identified whenever both act successfully on a given state, capturing an informal notion of ``having proved an auxiliary result inside the main proof, no matter how, the model should continue in the same way''.

These examples further motivate the use of category theory as a formal framework for the space of Lean statements and tactics.
In contrast to graph- or group-based representations, categorical structure naturally captures compositional and contextual equivalences between tactics, including both globally equivalent transformations and locally equivalent behaviors conditioned on proof states.
This additional expressive power enables modeling structural relationships between tactics that extend beyond purely syntactic or functional representations.
\subsection{Rewriting categories of formal languages}\label[appendix]{app:formal_languages}

In this subsection, we leave a brief remark on how rewriting categories could be used to model symmetries of formal languages, beyond the mathematical domain that we focus on in this work.

We consider formal languages equipped with a syntax and a semantics, where the syntax specifies which expressions are valid, and the semantics assigns meaning to expressions. Programming and logical languages are the canonical examples \citep{winskel1993formal, enderton2001mathematical, girard1989proofs}. A central concern therein is checking whether two given expressions are semantically equivalent. In programming languages, this could be if two programs map every input to the same output, or if they execute in the same way. In logic, this could be if each formula is derivable from the other. Unfortunately, these are largely undecidable \citep{rice1953classes, turing1936computable}.

A workaround is using a tractable notion of equivalence based on a set of rewriting rules that takes an expression and syntactically transforms it without altering its semantics of interest. If two expressions can be made identical through rewriting, we call them syntactically equivalent, which implies they are semantically equivalent (as in \Cref{sec:semantic_consistency}, the converse does not necessarily hold). This approach is usually implemented with \emph{rewriting systems} that reduce expressions to normal forms that cannot be further rewritten, and compare those to check equivalence \citep{book1993string, baader1998term}.
This idea is used in compiler optimization \citep{tate2009equality}, program refactoring \citep{lubin2024equivalence}, and (classical) automated theorem proving \citep{rossel2024equality}.

Rewriting categories (\Cref{sec:rewriting_categories}) can be used to describe rewriting systems. An (abstract) rewriting system consists of a set $A$, elements called objects or signatures, together with a binary relation on $A$, denoted by $\to$, called the rewrite relation \citep{book1993string}. Then we can construct a rewriting category $\mathscr{R}$ by
\begin{itemize}[noitemsep,nolistsep]
\item taking as its objects the set $A$,
\item taking as its arrows the reflexive transitive closure $\overset{*}{\to}$ of $\to$, i.e. the transitive closure of $\to \cup\, \{{\rm id}\}$, where ${\rm id}$ is the identity relation.
\end{itemize}
In other words, we can lift a rewriting system to a rewriting category by composing its rewriting relations and identity. Thus, the relationship between them is analogous to that between a group and its generating set. An $\mathscr{R}$-invariant map, defined in naturality sense (as in \Cref{app:success_invariance}), preserves syntactic equivalence by construction, and hence preserves semantic equivalence.

From a rewriting system, one can take a subset of its relations to obtain a restriction. For example, if the relations of a system $(A,\to)$ is an indexed union $\to\,=\,\to_1\cup\to_2$ of two rewriting rules, one can take $(A,\to_1)$ as a restriction. It would yield sound rewrites, but more incomplete as some semantically equivalent expressions would no longer be syntactically equivalent. For the respective categories $\mathscr{R}$ and $\mathscr{R}_1$, this translates to a subcategory relation $\mathscr{R}_1\subseteq\mathscr{R}$, and so a hierarchy of invariances as in \Cref{sec:semantic_consistency}. We exemplify with lambda calculus, a foundational example in programming and logic.

\begin{example}\label{example:lambda_calculus}
The untyped lambda calculus consists of lambda expressions defined by the syntax $e::=x\mid (\lambda x.e) \mid (e\,e)$ where $x$ ranges over a namespace. Its rewriting system can be identified as $(\Lambda, \to)$, where $\Lambda$ is the set of all lambda expressions, and the relations $\to=\to_\alpha\!\cup\!\to_\beta\!\cup\!\to_\eta$ consist of
\begin{itemize}[noitemsep, nosep]
\item $\alpha$-conversion (variable renaming) that rewrites $\lambda x.M$ to $\lambda y.M[x:=y]$ if $y$ is not free in $M$,
\item $\beta$-reduction (function application) that reduces an expression of the form $(\lambda x.t)s$ to $t[x:=s]$.
\item $\eta$-conversion (extensionality) that converts between $\lambda x.fx$ and $f$ if $x$ does not appear free in $f$.
\end{itemize}
Hence, there is a restricted rewriting system $(\Lambda,\to_\alpha)$ that only performs variable renaming. Between the rewriting categories $\mathscr{R}=(\Lambda,\overset{*}{\to})$ and $\mathscr{R}_\alpha=(\Lambda,\overset{*}{\to_\alpha})$, this translates to $\mathscr{R}_\alpha\subseteq\mathscr{R}$.
\end{example}

In the context of deep learning, \Cref{example:lambda_calculus} illustrates how approaches that aim to analyze or improve the robustness of neural networks on formal languages primarily focusing on variable renaming \citep{mirzadeh2024gsm,pei2023exploiting} yield weaker invariances than more general rewritings, such as in our work.
\section{Extended methodology}
\label[appendix]{app:extended_methodology}

This section provides additional details on the experimental methodology.
We first describe the construction of the miniF2F-rw dataset, which augments miniF2F \cite{zheng2022miniff} with equivalent problem reformulations (Appendix \ref{sec:minif2f-generation}).
We then detail the sampling mechanism $\mu(\cdot \mid T, K)$ used in the test-time ensemble setting (\Cref{sec:sampling-mechanism}) and report ablations over its design choices (\Cref{sec:sampling-mechanism}) and the number of ensemble variants in miniF2F-rw evaluations (\Cref{app:ablation_number_of_ensembles}).
The codebase, containing implementations of rewriting ensembles and the miniF2F-rw dataset can be found at: \href{https://github.com/kolejnyy/rw-ensembles}{https://github.com/kolejnyy/rw-ensembles}.

\subsection{Practical challenges of statement reformulation in Lean}
\label{sec:challenges_of_statement_reformulation}

Before detailing our methodology, we emphasize that statement rewriting in Lean is substantially more challenging than in natural language settings.
In NLP, query rewriting is a standard technique for improving the robustness and reasoning capabilities of LLMs \citep{ma2023query,zhou2023leasttomost,obrien2024improving}.
Such approaches rely on the assumption that semantically equivalent queries admit interchangeable solutions: answers to rewritten queries can typically be reused directly or mapped back to the original formulation with minimal effort.
For example, the question ``\textit{What is the capital of Germany?}'' may be rewritten as ``\textit{Which city in Germany is the capital?}'', while preserving both meaning and answerability.
Moreover, modern LLMs exhibit strong understanding of natural language, making the generation of such rewrites fairly simple.

\begin{wrapfigure}{r}{0.34\textwidth}
    \centering
%

\definecolor{nitThmHdr}{HTML}{E7CCF6}
\definecolor{nitThmHdrText}{HTML}{5E2A84}
\definecolor{nitThmDraw}{HTML}{A067C7}
\definecolor{compBoxHdr}{HTML}{F8C8C8}
\definecolor{compBoxHdrText}{HTML}{8B1E1E}
\definecolor{compBoxDraw}{HTML}{C44E4E}
\definecolor{corrHdr}{HTML}{D3F0D7}
\definecolor{corrHdrText}{HTML}{1F6B3A}
\definecolor{corrDraw}{HTML}{4EA567}

\resizebox{0.95\linewidth}{!}{%
\begin{tikzpicture}[
    font=\sffamily\normalsize,
    >={Stealth[length=2.6mm,width=2.0mm]},
    thmbox/.style={
        rectangle split, rectangle split parts=2,
        rectangle split part fill={nitThmHdr, white},
        rectangle split draw splits=true,
        rounded corners=3pt, draw=nitThmDraw, line width=0.4pt,
        fill=white,
        text width=6.8cm, inner xsep=6pt, inner ysep=4pt,
        align=left,
        font=\ttfamily\footnotesize,
        text=black,
    },
    prfbox/.style={
        rectangle split, rectangle split parts=2,
        rectangle split part fill={compBoxHdr, white},
        rectangle split draw splits=true,
        rounded corners=3pt, draw=compBoxDraw, line width=0.4pt,
        fill=white,
        text width=6.8cm, inner xsep=6pt, inner ysep=4pt,
        align=left,
        font=\ttfamily\footnotesize,
        text=black,
    },
    corrbox/.style={
        rectangle split, rectangle split parts=2,
        rectangle split part fill={corrHdr, white},
        rectangle split draw splits=true,
        rounded corners=3pt, draw=corrDraw, line width=0.4pt,
        fill=white,
        text width=6.8cm, inner xsep=6pt, inner ysep=4pt,
        align=left,
        font=\ttfamily\footnotesize,
        text=black,
    },
    tacticpill/.style={
        draw=orange!60!black!50, rounded corners=4pt, fill=yellow!35, line width=0.4pt,
        inner sep=5pt, font=\ttfamily\footnotesize,
        drop shadow={shadow xshift=0pt, shadow yshift=-0.7pt, opacity=0.18, fill=black!60},
    },
]
    \node[thmbox] (orig) at (0,0) {%
        {\sffamily\bfseries\footnotesize\color{nitThmHdrText}Statement $T_1$}
        \nodepart{two}%
        \lkw{theorem} T1 (n : \lty{$\mathbb{N}$}) (h : n + 1 = 3) : 3 = n + 1 \lkw{:= by}
        };
    \node[corrbox, below=0.3cm of orig] (prf) {%
        {\sffamily\bfseries\footnotesize\color{corrHdrText}Proof of statement $T_1$}
        \nodepart{two}%
        \lkw{theorem} T1 (n : \lty{$\mathbb{N}$}) (h : n + 1 = 3) : 3 = n + 1 \lkw{:= by} \\
        \hspace*{1.2em}\lkw{exact} h.symm 
        };

    \node[thmbox, below=0.3cm of prf] (aug) {%
        {\sffamily\bfseries\footnotesize\color{nitThmHdrText}Statement $T_2$}
        \nodepart{two}%
        \lkw{theorem} T2 (n : \lty{$\mathbb{N}$}) (h : 1 + n = 3) : 3 = n + 1 \lkw{:= by}
        };

    \node[prfbox, below=0.3cm of aug] (prf) {%
        {\sffamily\bfseries\footnotesize\color{compBoxHdrText}Incorrect proof of statement $T_2$}
        \nodepart{two}%
        \lkw{theorem} T2 (n : \lty{$\mathbb{N}$}) (h : 1 + n = 3) : 3 = n + 1 \lkw{:= by} \\
        \hspace*{1.2em}\lkw{exact} h.symm 
        };
\end{tikzpicture}%
}
    \caption{Lean proofs do not transfer between equivalent statements.}
    \label{fig:proof_transfer}
    \vspace{-1em}
\end{wrapfigure}%
In Lean theorem proving, these assumptions no longer apply. Proof correctness is determined by a syntactically strict formal system, where even minor modifications to a statement may invalidate an existing proof.
As a result, semantically equivalent statements do not necessarily admit interchangeable proofs, and transferring proofs between related formulations can require non-trivial transformations. Consider the statements $T_1$ and $T_2$ in \Cref{fig:proof_transfer}.
Although \leaninline{n + 1} and \leaninline{1 + n} are mathematically equivalent, they are syntactically distinct terms, and Lean does not identify them automatically.
Consequently, the proof term \leaninline{h.symm}, which proves $T_1$, is rejected as a proof of~$T_2$.
This illustrates that meaningfully transforming a problem while remaining capable of recovering a proof of the original is non-trivial.

A sound augmentation mechanism can be implemented in two distinct ways:
\begin{itemize}[nosep]
\item \textbf{Out-of-Lean.} Analogously to augmentation methods in natural language tasks, an LLM may generate rewrites of the original statement directly.
However, this approach additionally requires constructing a Lean certificate that transfers proofs of the augmented statement back to the original problem.
In general, such certificates are non-trivial to obtain and may themselves require LLM inference, which needs to be accounted for under fixed computation budgets.
\item \textbf{Within-Lean.} Alternatively, augmentations can be generated directly within Lean by exploring the space of proof states via tactics. This avoids using LLM calls outside the core proving process, but introduces two challenges: selecting meaningful rewrite rules without excessively complicating the proof state, and defining a mapping from proof states back to theorem statements, since LLM provers operate on statements, whereas Lean applies tactics on states.
\end{itemize}

In our test-time sampling framework (Appendix~\ref{sec:sampling-mechanism}), we adopt the latter approach.
We sample transformations from the large collection of unconditional premises available in Mathlib, comprising over 30{,}000 lemmas.
This yields non-trivial statement rewritings, as illustrated in \Cref{fig:examples_of_solved_rewrites}, while retaining a principled mechanism for proof transfer through tactic composition.

\subsection{Generation of the miniF2F-rw dataset}
\label{sec:minif2f-generation}

To systematically demonstrate semantic inconsistency among LLM-based Lean provers, we construct \textit{miniF2F-rw}, a dataset of 5726 theorems ($2922$ validation, $2804$ test) obtained as semantics-preserving reformulations of problems from the {miniF2F} benchmark.
Each variant is paired with Lean certificates of \texttt{variant} $\Leftrightarrow$ \texttt{seed}, ensuring that any proof of the augmented statement can be translated into a proof of the original, and vice versa.
We further apply manual checks to remove variants that introduce spurious complexity or trivialize the task.
An overview of the pipeline is shown in \Cref{fig:dataset_generation}.

\textbf{Initial variant generation.}
First, we generate candidate augmentations by prompting GPT-5.4 to produce $5$-$15$ rewrites of each miniF2F statement.
For each seed theorem, we then sample $20\%$ of its variants and apply a second transformation, again using GPT-5.4, to rename variables and hypothesis names.
The exact prompts used in this process are listed in Appendix \ref{app:prompts}.
At the end of this stage, we obtain over 7500 statements for subsequent verification.

\textbf{Certifying equivalence.}
To ensure semantic equivalence between augmented statements and their seeds, we employ a two-stage validation procedure.
First, we manually check each variant and discard those that alter the content in an unreasonably complicated (e.g., replacing \leaninline{1} with \leaninline{0.factorial}) or simplistic (e.g., simplifying algebraic expressions) way.

Second, we generate Lean certificates establishing \texttt{variant} $\Leftrightarrow$ \texttt{seed}.
Each such implication is decomposed into a series of simpler subproblems.
Writing a seed theorem $T$ and its variant $T'$ as:
\[
T := (C_1 \:\land\: \dots \:\land\:C_k) \implies G \qquad\text{and}\qquad  T' := (C_1' \:\land\: \dots \:\land\:C_l') \implies G'
\]
for the direction \texttt{variant} $\Rightarrow$ \texttt{seed}, it suffices to show:
\[
\begin{aligned}
    (C_1 \:\land\: \dots \:\land\:C_k) \implies& C'_i \qquad& \text{for all } i \in {1, \dots, l} \\
    G' \implies& G 
\end{aligned}
\]
Given these components, a proof of $T$ can be obtained by reconstructing the premises of $T'$, invoking $T'$ (simulated via \leaninline{sorry}), and finally reducing $G'$ to $G$ (see \Cref{fig:certificate example} for a code example).
The inverse implication can be derived analogously, by flipping $T$ and $T'$.
This methodology resembles that of BEq \citep{liu2025beq}, which calls a prover to produce ``a proof of $T$ using $T'$''.
However, BEq only checks whether the name of $T'$ was mentioned in the proof, without assessing whether it was meaningfully used, while our decomposition enforces semantic correctness.

\begin{figure}
    \centering
    \includegraphics[width=0.95\linewidth]{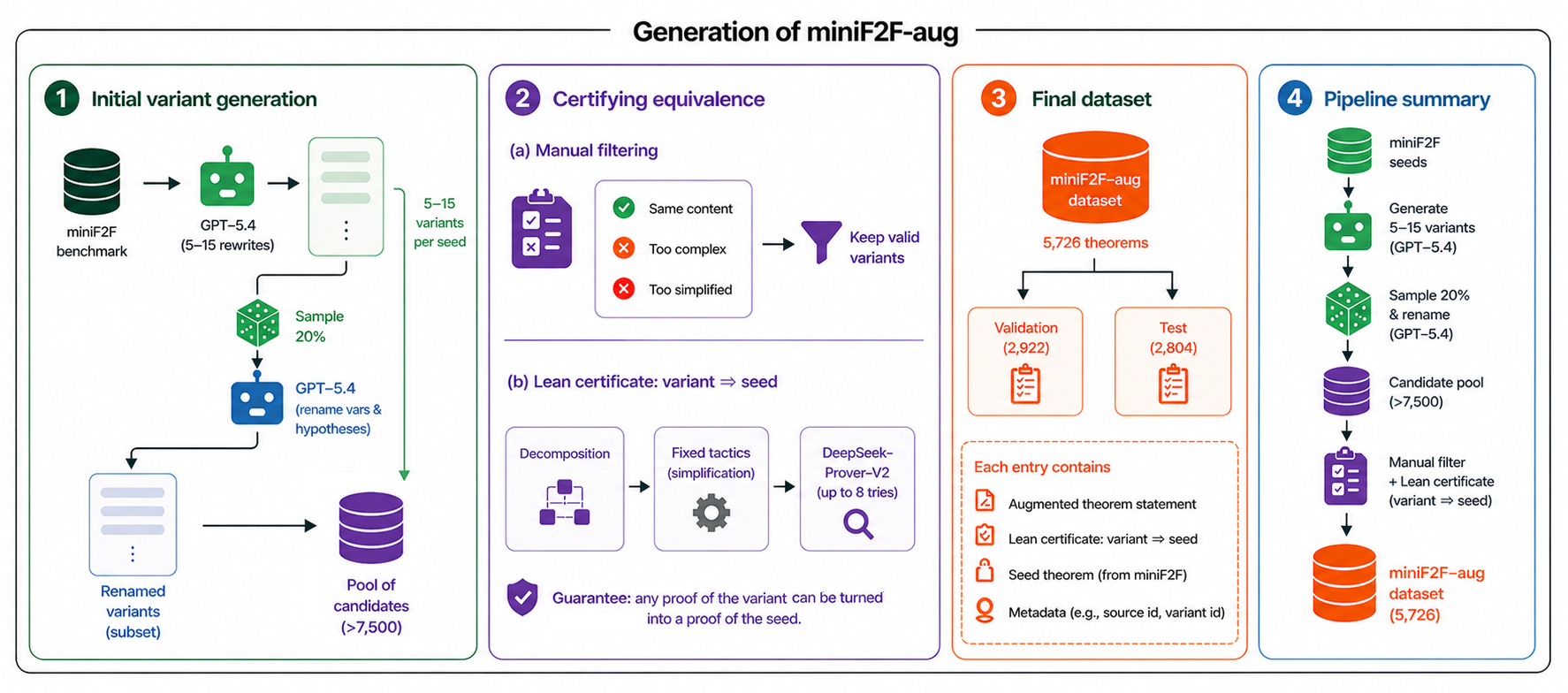}
    \caption{Process of generation of the miniF2F-rw dataset.}
    \label{fig:dataset_generation}
\end{figure}

To close each of these subgoals, we initially apply a set of fixed simplification tactics (see the proof of \leaninline{goal_to_goal} in \Cref{fig:certificate example}).
If this fails, we invoke DeepSeek-Prover-V2 \citep{ren2025deepseek} with up to $8$ attempts per subproblem.
Since the decomposed goals are significantly simpler than the original statements, this budget suffices in most cases.
Finally, we manually check the failure instances and repair them when a small number of augmentations of a specific problem are successfully verified.

\begin{figure}
    \centering
\begin{lean}
theorem mathd_algebra_405_v2 (S : Finset ℕ)
  (h : ∀ x, x \in S ↔ 0 < x \and (x + 2) ^ 2 < 20) :
  2 = S.card := by
  sorry

theorem cond_1 (S : Finset ℕ)
  (h : ∀ x, x \in S ↔ 0 < x \and x ^ 2 + 4 * x + 4 < 20)
  : ∀ x, x \in S ↔ 0 < x \and (x + 2) ^ 2 < 20 := by
  intro x
  simp_all only [Finset.mem_filter, Finset.mem_range, Finset.mem_univ,
    true_and_iff, and_true_iff, Nat.cast_ofNat, Nat.cast_one,
    Nat.cast_add, Nat.cast_mul, Nat.cast_sub, Nat.cast_pow]
  constructor <;> intro h <;>
  (try constructor) <;>
  (try simp_all [sq]) <;>
  (try ring_nf at * <;> omega) <;>
  (try nlinarith) <;>
  (try linarith)

theorem goal_to_goal (S : Finset ℕ) (aug_goal : 2 = S.card) : S.card = 2
  := by
  try decide
  try native_decide
  try trivial
  try linarith
  try nlinarith
  try omega
  try norm_num
  try field_simp
  try simp
  try ring_nf
  try ring
  try exact?

theorem mathd_algebra_405 (S : Finset ℕ)
  (h : ∀ x, x \in S ↔ 0 < x \and x ^ 2 + 4 * x + 4 < 20) :
  S.card = 2 := by
  have aug_0 : ∀ x, x \in S ↔ 0 < x \and (x + 2) ^ 2 < 20 := cond_1 S h
  have aug_goal : 2 = S.card := mathd_algebra_405_v2 S aug_0
  exact goal_to_goal S aug_goal
\end{lean}
    \caption{An example of a Lean certificate, proving \leaninline{mathd_algebra_405_v2} $\Rightarrow$ \leaninline{mathd_algebra_405}.}
    \label{fig:certificate example}
\end{figure}

\subsection{Description of the test-time sampling mechanism}
\label{sec:sampling-mechanism}

Beyond the controlled miniF2F-rw setting, where augmentations are precomputed and verified, we consider a test-time ensemble framework, generating reformulations online.
The procedure consists of three stages: rewriting, composition, and reranking, described below and illustrated in \Cref{fig:test-time-sampling-process}.

\begin{figure}
    \centering
    \includegraphics[width=0.99\linewidth]{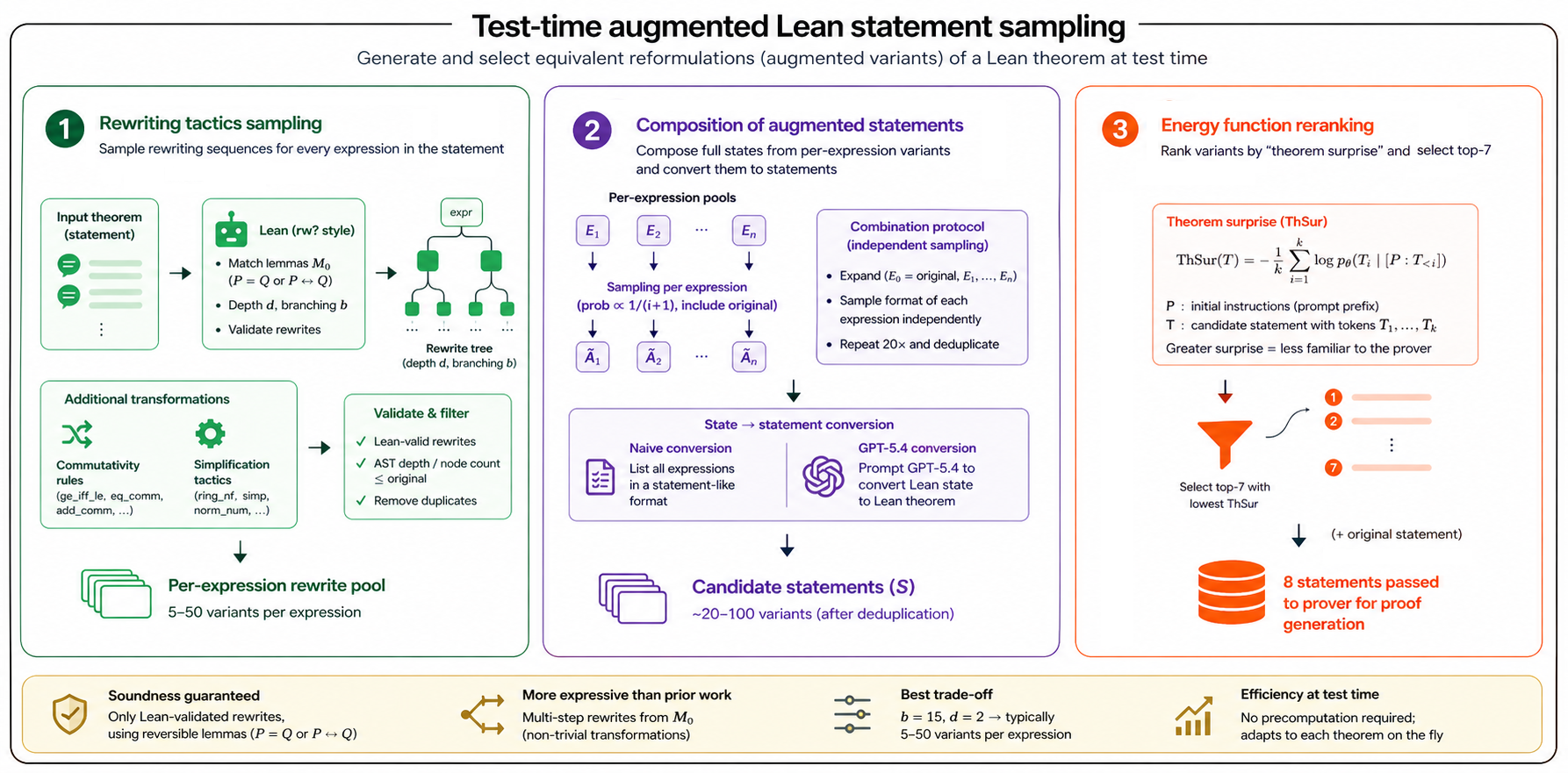}
    \vspace{-0.5em}
    \caption{Sampling process for the test-time ensemble mechanism.}
    \label{fig:test-time-sampling-process}
    \vspace{-0.5em}
\end{figure}

\textbf{Rewriting tactics sampling.}
In the first stage, we use Lean to sample rewrite sequences applicable to each expression in the input statement.
For a given expression, we construct sequences of $(L_1, \dots, L_d)$ lemmas drawn from Mathlib \citep{mathlib2020}.
To guarantee semantic equivalence, we restrict attention to the subset $\mathbb{M}_0$ of unconditioned lemmas of the form $P=Q$ or $P\Leftrightarrow Q$, whose application is inherently reversible.
Despite this constraint, the resulting transformation space strictly generalizes prior augmentation schemes \citep{tian2025evolprover,paliwal2020graph,li2021neurotactic,blaauwbroek2024graph2tac}, which rely on abstract syntax tree (AST) symmetries to ensure soundness.
In contrast, $\mathbb{M}_0$ induces a richer rewriting category $\mathscr{R}_{\mathbb{M}_0}$ that supports non-trivial transformations (see \Cref{fig:examples_of_solved_rewrites}), while preserving correctness through Lean tactic semantics.
This provides a tractable, more complete approximation to the full tactic rewriting category $\mathscr{R}_\mathcal{T}$.

To sample the sequences of rewriting rules, we use a methodology similar to the \leaninline{rw?} tactic, yet with increased exploration breadth and depth.
At each step, candidate lemmas from $\mathbb{M}_0$ are matched against the expression’s AST, ranked using the \leaninline{rw?}'s utility heuristic, and validated via application.
We pick the top-$b$ valid rewrites and recursively expand them up to depth $d$. This yields up to $b^d$ candidate sequences and corresponding rewritten expressions.

To control complexity, we apply AST-based filtering, discarding rewrites whose tree depth or node count exceeds that of the original expression.
We then deduplicate syntactically equal outputs arising from different rewrite paths.
After validation, we found that $b=15$ and $d=2$ provide a favorable performance-to-efficiency trade-off, usually producing $5$–$50$ distinct variants per expression.

As the search space over $\mathbb{M}_0$ remains large, limiting to top-$b$ candidates may omit semantically meaningful rewrites.
To mitigate this, we explicitly augment the pool with selected commutativity rules (\leaninline{ge_iff_le, ← ge_iff_le, ← gt_iff_lt, gt_iff_lt, eq_comm, add_comm}, \leaninline{mul_comm}, \leaninline{and_comm}, \leaninline{or_comm}) and standard normalization tactics (\leaninline{ring_nf}, \leaninline{simp}, \leaninline{norm_num}, \leaninline{field_simp}).
The~resulting outputs are incorporated into the final set of candidate rewrites.

\textbf{Composition of augmented statements.}
Given the set of rewritten versions for each expression (premises and goal), we construct augmented theorem statements suitable for downstream provers.
This involves two design choices: (i) how to select rewrites for each statement, and (ii) how to map rewritten expressions back to a coherent theorem.

For rewrite selection, a simple baseline modifies a single expression per variant, keeping all others unchanged.
While straightforward, this approach becomes restrictive for statements with many components, as all but one remain unchanged in each augmentation.
To increase diversity, we instead sample a rewrite for each expression independently from its candidate set.
Concretely, for an expression $E$ with ranked rewrites $(E_1, \dots, E_n)$, we form $(E_0 = E, E_1, \dots, E_n)$ and sample $E_i$ with probability proportional to $\frac{1}{i+1}$.
Applying this procedure across all expressions yields a sampled state.
Repeating it $20$ times and removing duplicates produces a diverse set of augmented states.

We then convert each state into a theorem statement.
This step is non-trivial, as Lean’s internal representation may differ from its surface syntax (e.g., implicit type annotations or notation changes such as \leaninline{Real.pi} vs.\ $\pi$).
While syntactic correctness is not strictly required (generated proofs are ultimately checked against the original statement after tactics-based transfer), poorly formed outputs can degrade prover performance.
We therefore consider two strategies.
A naive approach directly linearizes the state into a statement-like format.
A more advanced state-to-statement function is realised by prompting an LLM to convert provided Lean states to theorems.
Empirically, the latter yields higher-quality outputs and resolves common inconsistencies, as confirmed by ablations.
In our experiments, we use GPT-5.4 \cite{openai2025gpt54} for this conversion (prompt available in Appendix \ref{sec:state_to_statement_prompt}).
This choice is primarily for convenience: the mapping is highly systematic, and a small LLM trained on $(\text{state}, \text{statement})$ pairs should suffice. 
We leave this simplification for future work.

\textbf{Energy function reranking.}
Given a set $S$ of retrieved statements, we select a subset of $7$ variants to pass to the prover.
Inspired by orbit canonicalization methods \citep{kaba2023equivariance, singhal2025test, lubin2024equivalence}, we rank candidates using an energy function that approximates the success probability $s_\theta$ of prover $p_\theta$.
As estimating $s_\theta$ directly would require full proof attempts, we introduce a lightweight proxy, called \emph{theorem surprise}.

The underlying intuition is that LLM-based provers perform better on in-distribution inputs.
We therefore prioritize statements that appear more “natural” to the model, as measured by their token-level likelihood under $p_\theta$.
Concretely, for a statement $T$, to generate a proof for $T$, $p_\theta$ is prompted with input $[P:T]$, where $P$ is the initial set of instructions ("\textit{You are an expert in Lean4...}").
Letting $T_1, \dots, T_l$ be the tokens representing $T$, we define the theorem surprise of $T$ as:
\[
\text{ThSur}(T) = - \frac{1}{l} \sum_{i = 1}^l \log p_\theta(T_i \mid [P:T_{<i}])
\]
where $p_\theta(T_i \mid [P:T_{<i}])$ is the probability of $p_\theta$ generating token $T_i$ after seeing $P$ and the first $i-1$ tokens of $T$.
Higher values indicate less familiar inputs.
We rank candidates by $\text{ThSur}$ and select the seven lowest-scoring statements, which, together with the original problem, are passed to $p_\theta$ for proof generation.

\begin{table}[h]
    \caption{Success rate per ensemble style on miniF2F-valid (with DeepSeek-Prover-V2)}
    \label{tab:ensemble-abl-results}
    \centering
    \small
    \setlength{\tabcolsep}{10pt}
    \begin{tabular}{lcccc}
    \toprule
    \textbf{Budget} & \textbf{8} & \textbf{16} & \textbf{32} & \textbf{64} \\
    \midrule
    DSP-V2 (baseline) & $78.4$ & $78.8$ & $79.1$ & $79.5$  \\
    \midrule
    \midrule
    DSP-V2 (ThSur + simp) & $78.7$ & $79.2$ & $79.7$ & ${80.1}$  \\
    DSP-V2 (ThSur + simp + excomm + comb20) & $78.9$ & $79.6$ & ${80.1}$ & ${80.6}$ \\
    DSP-V2 (ThSur + simp + excomm + comb20 + LLM) & $\bm{79.7}$ & $\bm{80.5}$ & $\bm{81.1}$ & $\bm{81.5}$ \\
    \bottomrule
    \end{tabular}
    \vspace{-0.4em}
\end{table}

\textbf{Ablation on the sampling protocol.}
We iteratively refine the sampling procedure via ablations on miniF2F-valid (\Cref{tab:ensemble-abl-results}).
All variants use theorem surprise for reranking and include rewrites obtained from standard simplification tactics.
Augmenting the rewrite pool with explicit commutativity rules and switching to the combination-based state construction (sampling 20 variants) yields consistent improvements.
Finally, replacing the naive state-to-statement conversion with GPT-5.4 further boosts performance, highlighting the importance of well-formed input representations for prover success.

\subsection{Ablation on the number of sampled rewrites}
\label[appendix]{app:ablation_number_of_ensembles}

\Cref{prop:monotonicity} predicts that ensemble performance should increase with the number of sampled augmentations.
We empirically validate this by evaluating ensemble sizes of $2$, $4$, and $8$ on miniF2F-rw.
\Cref{fig:minif2f-ensemble-ablation} reports validation and test success rates as a function of the sample budget for each non-reasoning prover in our main setup.
Across all models, PASS@$k$ improves monotonically with the ensemble size, consistent with the theoretical prediction.
Notably, for Goedel-Prover-DPO and DeepSeek-Prover-V2, using 4 variants already surpasses the seed baseline despite the performance gap between seed and random augmentations, with further gains observed at 8 samples.
Based on these results, we use 8 variants in all miniF2F-rw experiments.
\begin{figure}[t]
    \centering
    \vspace{-1.9em}
    \includegraphics[width=0.95\linewidth]{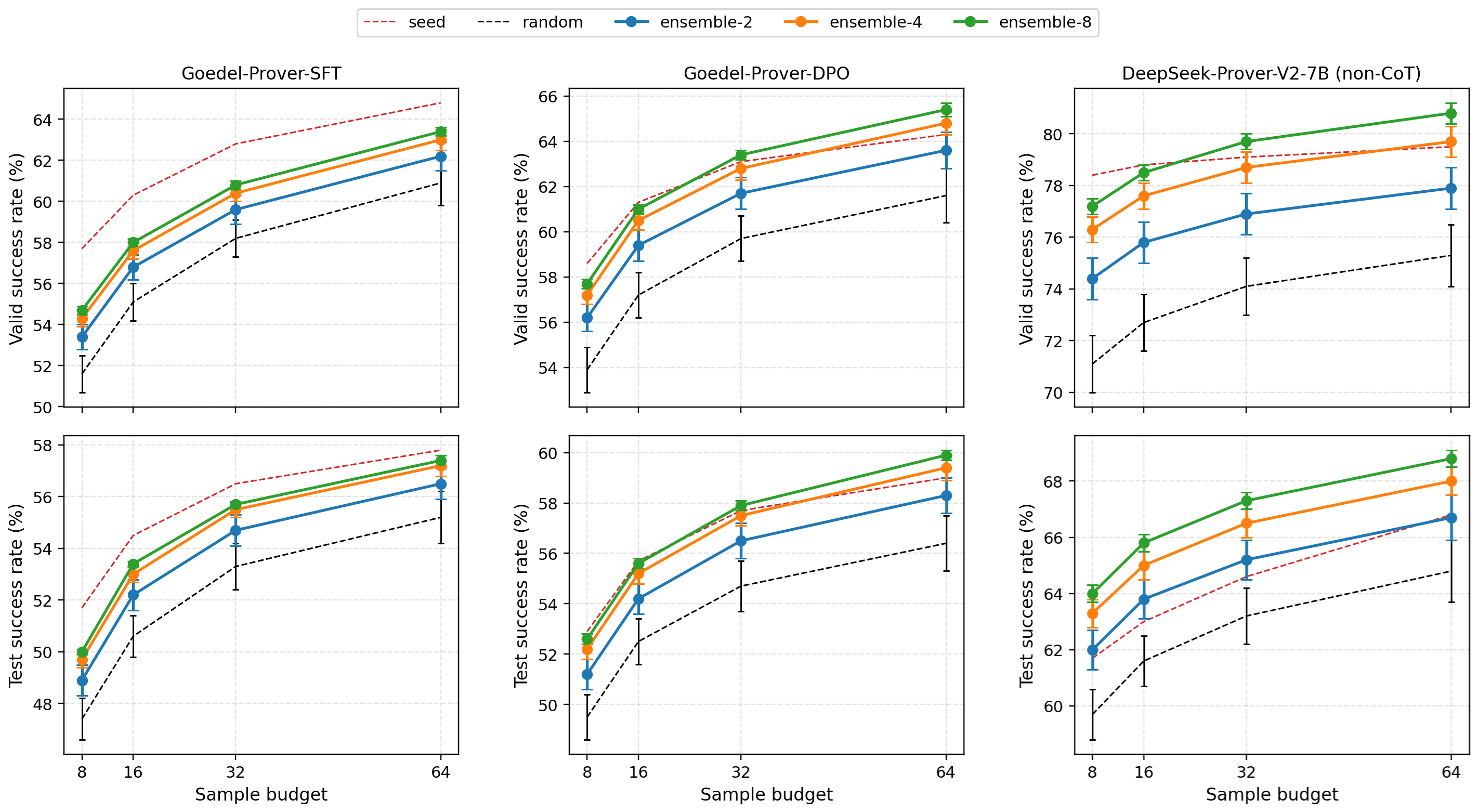}
    \caption{PASS@$k$ success rates on miniF2F-rw for ensemble sizes $2$, $4$, and $8$ (solid lines), with single seed and random-sample baselines as narrower dashed lines. Ensemble and random curves include $\pm$ one estimated standard error.}
    \label{fig:minif2f-ensemble-ablation}
\end{figure}

\subsection{Adaptation of PASS@\textit{k} to ensembles}
In its standard formulation, given $C$ successes over $N$ attempts, PASS@$k$ is measured as:
\[
\text{PASS@}k = 1 - \frac{\binom{N-C}{k}}{\binom{N}{k}}
\]
representing the probability of seeing at least one success across randomly chosen $k$.
For an ensemble, running on $K$ variants with success-total attempt pairs $(N_1, C_1), \dots, (N_K, C_K)$, we report:
\[
\text{PASS}_{ens}@k = 1 - \prod_{i=1}^K \frac{\binom{N_i-C_i}{k/K}}{\binom{N_i}{k/K}}
\]

\subsection{Testing environment specification and hyperparameters.}
\label[appendix]{sec:hyperparameters}

\textbf{Hardware specifications.}
We perform proof generation on a machines equipped with Intel Platinum 8628 CPUs and NVIDIA H100 GPUs (80GB VRAM).
Each model is allocated 8 CPU cores, 64GB RAM, and a single H100 GPU. Under this setup, non-reasoning models generate one proof attempt every 0.5-1.3s on average, while models with explicit informal reasoning require 7-10s per attempt.

Proof verification is conducted on machines with Intel Platinum 8628 or Intel Xeon Gold 6326 CPUs. This stage is highly parallelizable; we dispatch separate jobs per problem, each using 8 CPU cores and 64GB RAM. Verification latency is approximately 1s per attempt.
Overall, for the miniF2F-rw benchmark (5,700 problems, 64 attempts each), a single non-reasoning prover requires approximately 4–5 GPU-days for generation and an additional \textasciitilde4 CPU-days for Lean-based verification.

\textbf{Model hyperparameters.}
All LLM provers used in our experiments are prompted with the original prompts from their repositories. 
We use vLLM \citep{kwon2023vllm} for more efficient generation, and sampling temperature of $0.7$ for all models.
The maximum number of output tokens is set to $2000$ for non-reasoning, and $8192$ for reasoning provers.
For calls to GPT-5.4 \cite{openai2025gpt54}, we fix the temperature at~$0.2$.

\textbf{API costs.}
We use GPT-5.4 to augment problem statements during construction of the miniF2F-rw dataset. The cost per statement ranges from $\$0.01$ to $\$0.05$, totaling approximately $\$14.50$ for $488$ benchmark problems.
At test time, GPT-5.4 is additionally employed to convert states into statements for ensembling. This incurs a total cost of $\$18.19$ across all datasets (miniF2F, ProofNet, and Ineq-Comp), corresponding to approximately $\$0.009$ per problem. As discussed in Appendix~\ref{sec:sampling-mechanism}, this transformation can be implemented using a local model, mitigating cost in practical deployments.
\section{Additional experiments with reasoning models}
\label[appendix]{app:reasoning_experiments}

\begin{table*}
    \centering
    \caption{PASS@$k$ results of selected reasoning LLM provers on the miniF2F-rw benchmark.}
    \label{tab:minif2f-rw-reasoning-results}
    \tiny
    \vspace{0.5em}
    \setlength{\tabcolsep}{2pt}
\begin{tabular}{l@{\hspace{12pt}}l@{\hspace{12pt}}llllllll}
\toprule
 & & \multicolumn{4}{c}{\textit{minif2f-aug-valid}} & \multicolumn{4}{c}{\textit{minif2f-aug-test}} \\
\midrule
\textbf{Model} & \textbf{Variant} & $k=1$ & $k=8$ & $k=16$ & $k=32$ & $k=1$ & $k=8$ & $k=16$ & $k=32$ \\
\midrule
\multirow{4}{*}{Goedel-Prover-V2-8B} & {seed} & $\bm{68.8}$ & $\bm{81.1}$ & $\bm{82.5}$ & $\bm{83.2}$ & $\bm{58.9}$ & $\underline{74.6}$ & $76.4$ & $77.5$ \\
& {random} & $\underline{65.2{\scriptstyle \pm0.7}}$ & $79.5{\scriptstyle \pm0.6}$ & $81.2{\scriptstyle \pm0.6}$ & $82.2{\scriptstyle \pm0.6}$ & $\underline{56.0{\scriptstyle \pm0.7}}$ & $72.9{\scriptstyle \pm0.7}$ & $75.3{\scriptstyle \pm0.8}$ & $76.7{\scriptstyle \pm0.8}$ \\
& {controlled} & -- & $\underline{80.6{\scriptstyle \pm0.1}}$ & $\underline{81.9{\scriptstyle \pm0.1}}$ & $\underline{82.8{\scriptstyle \pm0.1}}$ & -- & $\bm{75.0{\scriptstyle \pm0.1}}$ & $\bm{77.0{\scriptstyle \pm0.1}}$ & $\underline{78.3{\scriptstyle \pm0.1}}$ \\
& {test-time} &  -- & $80.4$ & $\underline{81.9}$ & $\underline{82.8}$ & -- & $74.4$ & $\underline{76.7}$ & $\bm{78.6}$ \\
\midrule
\multirow{4}{*}{DeepSeek-Prover-V2-7B} &{seed} & $\bm{67.4}$ & $\bm{78.1}$ & $\bm{79.7}$ & $80.7$ & $\bm{58.0}$ & $\bm{71.0}$ & $\bm{73.2}$ & $\bm{75.0}$ \\
&{random} & $\underline{62.7{\scriptstyle \pm0.8}}$ & $75.9{\scriptstyle \pm0.7}$ & $77.7{\scriptstyle \pm0.8}$ & $79.3{\scriptstyle \pm1.0}$ & $\underline{53.4{\scriptstyle \pm0.8}}$ & $68.5{\scriptstyle \pm0.7}$ & $70.8{\scriptstyle \pm0.7}$ & $73.0{\scriptstyle \pm0.9}$ \\
&{controlled} & -- & $\underline{77.7{\scriptstyle \pm0.1}}$ & $\underline{79.5{\scriptstyle \pm0.2}}$ & $\bm{81.0{\scriptstyle \pm0.2}}$ & -- & $70.3{\scriptstyle \pm0.1}$ & $72.3{\scriptstyle \pm0.1}$ & $\underline{74.2{\scriptstyle \pm0.1}}$ \\
& {test-time} & -- & $77.5$ & $79.3$ & $\bm{81.0}$ & -- & $\underline{70.9}$ & $\underline{72.9}$ & $\underline{74.2}$ \\
\midrule
\multirow{4}{*}{Kimina-Prover-Distill-8B} & {seed} & $\bm{61.9}$ & $\underline{73.4}$ & $\underline{75.4}$ & $\underline{76.6}$ & $\bm{55.0}$ & $68.2$ & $70.0$ & $71.3$ \\
& {random} & $\underline{57.6{\scriptstyle \pm0.9}}$ & $70.9{\scriptstyle \pm0.9}$ & $73.6{\scriptstyle \pm0.9}$ & $75.8{\scriptstyle \pm1.1}$ & $\underline{52.2{\scriptstyle \pm0.8}}$ & $66.5{\scriptstyle \pm0.8}$ & $68.9{\scriptstyle \pm0.8}$ & $70.7{\scriptstyle \pm0.9}$ \\
& controlled & -- & $\bm{74.2{\scriptstyle \pm0.2}}$ & $\bm{76.5{\scriptstyle \pm0.2}}$ & $\bm{78.3{\scriptstyle \pm0.2}}$ & -- & $\underline{68.8{\scriptstyle \pm0.1}}$ & $\underline{71.0{\scriptstyle \pm0.1}}$ & $\underline{72.5{\scriptstyle \pm0.1}}$ \\
& test-time &  -- & $72.0$ & $74.5$ & $76.3$ & -- & $\bm{70.1}$ & $\bm{72.6}$ & $\bm{74.4}$ \\
\bottomrule
\end{tabular}
\end{table*}

\subsection{Robustness of reasoning provers on miniF2F-rw}

We evaluate three state-of-the-art open-source reasoning LLM provers: Goedel-Prover-V2-8B \cite{lin2025goedel_v2}, Kimina-Prover-Distill-8B \cite{kimina_prover_2025} and DeepSeek-Prover-V2-7B \cite{ren2025deepseek} (in CoT mode).
We run each prover for $n=32$ attempts on all variants from miniF2F-rw and report PASS@$k$ for $k\in\{1,8,16,32\}$. 

The results are presented in \Cref{tab:minif2f-rw-reasoning-results}.
Although the degradation is less severe than for non-reasoning provers (\Cref{tab:minif2f-rw-results}), suggesting that an explicit reasoning phase improves robustness to formal rewrites, all models still consistently prefer the seed statements over their augmented variants.
For example, on the miniF2F-rw-test split, Goedel-Prover-V2-8B drops from $58.9$ to $56.0$ PASS@1 and from $77.5$ to $76.7$ PASS@8.
The effect is more pronounced for DeepSeek-Prover-V2-7B, whose PASS@1 declines from $67.4$ to $62.7$ on the validation split, and from $58.0$ to $53.4$ on the test set.
This indicates that current reasoning-oriented provers remain sensitive to syntactic formulation and do not exhibit full invariance under semantics-preserving transformations.
Consequently, these results further motivate the structural bias criteria proposed in this work, even for reasoning models.

Consistent with the observations in \Cref{sec:results}, ensembling across semantically equivalent variants reduces performance variance and consistently improves over selecting a single random sample.
This trend holds across all evaluated models, datasets, and sampling budgets. Importantly, both the \textbf{controlled} and \textbf{test-time} sampling protocols remain at least competitive with the \textbf{seed} baselines in every configuration.
For Kimina-Prover-Distill-8B, they substantially outperform the default model, increasing PASS@32 on the test split from $71.3\%$ to $74.4\%$.
Overall, these results suggest that, even for reasoning-based formal provers, allocating the inference budget across multiple equivalent representations of the input theorem can yield significant performance gains.

\subsection{Comparisons at matched token counts on miniF2F}
\label{app:minif2f-tokens}

\begin{table}[t]
    \caption{Average token usage per inference mode per attempt for DeepSeek-Prover-V2-7B.}
    \label{tab:minif2f-tokens}
    \centering
    \footnotesize
    \setlength{\tabcolsep}{10pt}
    \begin{tabular}{lrr}
    \toprule
    \textbf{Model} & \textbf{miniF2F-valid} & \textbf{miniF2F-test} \\
    \midrule
    {DeepSeek-Prover-V2-7B} & 346 & 395 \\
    {DeepSeek-Prover-V2-7B (CoT)} & 3138 & 3763 \\
    \bottomrule
    \end{tabular}
    \vspace{-1em}
\end{table}

\Cref{tab:minif2f-tokens} contains average token counts per problem on miniF2F.
Since reasoning mode uses 9-10$\times$ more tokens on average per problem, in \Cref{tab:minif2f-rw-results,tab:minif2f-rw-reasoning-results}, a comparison at roughly matched token counts within DeepSeek-Prover-V2 results would be comparing reasoning with budget $k$ against ensemble with budget $8k$. In these comparisons, ensemble is competitive with and often surpasses reasoning, showing its utility as an inference-time scaling method in this setting. While this matches token counts, we remark that in terms of the computational cost of forward passes, reasoning would be more inefficient, because in modern LLMs, generating a single long chain is worse than a set of shorter generations due to lack of parallelizability and the quadratic cost of self-attention.
\section{Proofs}
\label[appendix]{app:proofs}

\subsection{Functoriality of the proof functor}
\label[appendix]{sec:proof-functoriality}

Let $\mathscr{R}$ be a rewriting category. Recall the definition of the (stochastic) proof functor map on $\mathscr{R}$: 

\textbf{Definition \ref{def:proof_functor}.} 
Let $\mathbb{D}_+(\mathcal{T}^*)$ be the set of all distributions over the set $\mathcal{T}^*$ of Lean proofs, which assign non-zero probability to each element $\mathbf{t}\in\mathcal{T}^*$. 
The \emph{(stochastic) proof functor} on $\mathscr{R}$ is the functor $\mathcal{P}:\mathscr{R}\to\mathbf{Set}$ that maps each object $T\in \Stmt$ to $\mathcal{P}(T) = \mathbb{D}_+(\mathcal{T}^*)$,
and whose action on arrows $T \xrightarrow{\;\mathbf{t}\;} T'$ of $\mathscr{R}$ is defined as:
\[
\mathcal{P}(\mathbf{t})(\nu)(\mathbf{t}') \propto \nu([\mathbf{t}\!:\! \mathbf{t}'])\qquad \forall \nu \in \mathbb{D}_+(\mathcal{T}^*), \mathbf{t'}\in \mathcal{T}^*
\]

We claim that the so-defined map is indeed a functor.

\begin{proposition}
    The map from \Cref{def:proof_functor} is a well-defined functor $\mathcal{P}:\mathscr{R}\to\mathbf{Set}$.
\end{proposition}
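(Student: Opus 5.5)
The plan is to verify three things in order: that each $\mathcal{P}(\mathbf{t})$ is a well-defined function $\DpT\to\DpT$, that identities are preserved, and that composition is preserved. Throughout we work with the explicit normalized form
\[
\mathcal{P}(\mathbf{t})(\nu)(\mathbf{t}') \;=\; \frac{\nu([\mathbf{t}:\mathbf{t}'])}{Z_{\mathbf{t}}(\nu)},\qquad Z_{\mathbf{t}}(\nu)\coloneqq\sum_{\mathbf{s}\in\mathcal{T}^*}\nu([\mathbf{t}:\mathbf{s}]).
\]
Here $Z_{\mathbf{t}}(\nu)$ is the $\nu$-mass of the set of sequences having $\mathbf{t}$ as a prefix.

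\textbf{Well-definedness.} Because $\nu$ is a probability distribution, $Z_{\mathbf{t}}(\nu)\le 1$. Because $\nu\in\DpT$ assigns positive mass to every sequence, $Z_{\mathbf{t}}(\nu)\ge\nu(\mathbf{t})>0$. The normalization is therefore finite and nonzero, so $\mathcal{P}(\mathbf{t})(\nu)$ is a probability distribution. Every value $\nu([\mathbf{t}:\mathbf{t}'])$ is strictly positive, so the result again lies in $\DpT$. The output depends only on the tactic sequence labelling the arrow, and arrows of $\mathscr{R}$ are identified with such sequences, so $\mathcal{P}$ is well defined on arrows. This step is the only place \Cref{ass:non-zero-proof-chance}-type positivity is used, and it is the main obstacle in the sense that it justifies restricting $\mathcal{P}(T)$ to $\DpT$ rather than to all distributions.

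\textbf{Identities.} The identity $1_T$ is the empty sequence $\iota$, and $[\iota:\mathbf{t}']=\mathbf{t}'$. Hence $Z_\iota(\nu)=1$ and $\mathcal{P}(1_T)(\nu)=\nu$, which gives $\mathcal{P}(1_T)=1_{\DpT}$.

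\textbf{Composition.} Take $T\xrightarrow{\mathbf{t}}T'\xrightarrow{\mathbf{s}}T''$. Composition in $\mathscr{R}$ is concatenation, so the composite arrow is $[\mathbf{t}:\mathbf{s}]$, and we must show $\mathcal{P}(\mathbf{s})(\mathcal{P}(\mathbf{t})(\nu))=\mathcal{P}([\mathbf{t}:\mathbf{s}])(\nu)$. Write $\mu=\mathcal{P}(\mathbf{t})(\nu)$. By definition of $\mu$,
\[
\mu([\mathbf{s}:\mathbf{u}])\;\propto\;\nu([\mathbf{t}:[\mathbf{s}:\mathbf{u}]])\;=\;\nu([[\mathbf{t}:\mathbf{s}]:\mathbf{u}]),
\]
where the equality is associativity of concatenation. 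Applying $\mathcal{P}(\mathbf{s})$ to $\mu$ therefore gives a distribution proportional to $\mathbf{u}\mapsto\nu([[\mathbf{t}:\mathbf{s}]:\mathbf{u}])$. By definition, $\mathcal{P}([\mathbf{t}:\mathbf{s}])(\nu)$ is proportional to the same function.

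Both sides are probability distributions proportional to the same positive function of $\mathbf{u}$, so normalization forces them to be equal. There is no need to track the constants, although one can check directly that $Z_{[\mathbf{t}:\mathbf{s}]}(\nu)=Z_{\mathbf{t}}(\nu)\,Z_{\mathbf{s}}(\mu)$. This completes the proof that $\mathcal{P}$ is a functor.
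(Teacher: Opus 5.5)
Your proof is correct and follows essentially the same route as the paper. It normalizes $\nu([\mathbf{t}:\cdot])$ using positivity, checks that the empty sequence acts as the identity, and derives composition from associativity of concatenation together with the fact that two distributions proportional to the same function coincide; your explicit bounds $\nu(\mathbf{t})\le Z_{\mathbf{t}}(\nu)\le 1$ and the check that the output stays in $\mathbb{D}_+(\mathcal{T}^*)$ spell out steps the paper leaves implicit.
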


\textit{Proof. }
    First of all, note that $\mathbb{D}_+(\mathcal{T}^*) \in \Set$, so the function of $\mathcal{P}$ on objects is well-defined.
    
    Pick any $\mathbf{t}, \mathbf{t}' \in \mathcal{T}^*$ and $\nu \in \DpT$. The concatenation $\bf[t: t']$ is also an element of $\mathcal{T}^*$. Moreover, by positivity of $\nu$, the sum of $\nu([\mathbf{t} : \mathbf{t}''])$ over $\mathbf{t}'' \in \mathcal{T}^*$ is greater than 0. Hence, we can define a distribution $\nu' \in \DpT$ with probabilities proportional to $\nu([\mathbf{t}: \cdot])$, as follows:
    \[
    \nu'(\mathbf{t'}) = \frac{\nu([\mathbf{t}:\mathbf{t'}])}{\sum_{\mathbf{t}''\in\mathcal{T}^*} \nu([\mathbf{t} : \mathbf{t}''])} \qquad \forall \: \mathbf{t}' \in \mathcal{T}^*
    \]
    being precisely the distribution $\mathcal{P}(\mathbf{t})(\nu)(\mathbf{t}')$ from \Cref{def:proof_functor}. Therefore, the action of $\mathcal{P}$ on arrows of $\mathscr{R}$ is also well-defined. 
    It now suffices to show that $\mathcal{P}$ satisfies the functor conditions, i.e. preserves:
    \begin{itemize}
        \item \textit{identities}: Let $T\in \Stmt$ be a statement.
        The identity arrow $1_T$ in $\mathscr{R}$ is associated with the~application of the empty tactic sequence $\iota$. Unfolding the definition from above, we obtain:
        \[
        \mathcal{P}(1_T)(\nu)(\mathbf{t}') = \frac{\nu([\iota:\mathbf{t'}])}{\sum_{\mathbf{t}''\in\mathcal{T}^*} \nu([\iota : \mathbf{t}''])} = \nu([\iota : \mathbf{t}']) = \nu(\mathbf{t}')\qquad \forall \: \nu \in \DpT, \mathbf{t}'\in \mathcal{T}^*
        \]
        meaning that $\mathcal{P}(1_T)(\nu) = \nu$ for all $\nu\in \DpT$. Consequently, $\mathcal{P}(1_T) = 1_{\DpT} = 1_{\mathcal{P}(T)}$.

        \item \textit{composition}: Let $T \xrightarrow{\;\mathbf{t}\;} T' \xrightarrow{\;\mathbf{t}'\;} T''$ be arrows in $\mathscr{R}$. We need to show $\mathcal{P}(\mathbf{t}') \circ \mathcal{P}(\mathbf{t}) = \mathcal{P}(\mathbf{t}' \circ \mathbf{t})$. 
        Let $\nu \in \DpT$ be a distribution and let $\mathbf{t}'' \in \mathcal{T}^*$ be a proof. Then:
        \[
        \left(\mathcal{P}(\mathbf{t}') \circ \mathcal{P}(\mathbf{t}) \right) (\nu)(\mathbf{t}'') = \left(\mathcal{P}(\mathbf{t}') \right) (\mathcal{P}(\mathbf{t})(\nu))(\mathbf{t}'') \propto (\mathcal{P}(\mathbf{t})(\nu))([\mathbf{t}' : \mathbf{t}'']) \propto \nu([\mathbf{t} : \mathbf{t}' : \mathbf{t}''])
        \]
        and similarly
        \[
        \mathcal{P}(\mathbf{t}' \circ \mathbf{t})(\nu)(\mathbf{t}'') \propto \nu([(\mathbf{t}' \circ \mathbf{t}) : \mathbf{t}'']) = \nu([\mathbf{t} : \mathbf{t}' : \mathbf{t}''])
        \]
        Since both $\left(\mathcal{P}(\mathbf{t}') \circ \mathcal{P}(\mathbf{t}) \right)\! (\nu)$ and $\mathcal{P}(\mathbf{t}' \circ \mathbf{t})(\nu)$ are distributions over $\mathcal{T}^*$, by the result above, they must be equal. As $\nu$ was arbitrary, we can conclude that  $\mathcal{P}(\mathbf{t}') \circ \mathcal{P}(\mathbf{t}) = \mathcal{P}(\mathbf{t}' \circ \mathbf{t})$. \qed
    \end{itemize}

\clearpage

\subsection{Proof of \Cref{prop:invariance_limit}}
\label[appendix]{sec:proof-invariance-limit}

To prove $\mathscr{R}$-invariance of the ensemble framework in the limit of samples, we assume the following:

\textbf{\Cref{ass:sampling_coverage}.} Let $T \sim_\mathscr{R} T'$ and let $T'' \in \mathscr{R}$ be a statement. Suppose that for some $K$, the~probability $\mathbb{P}(T'' \!\in \mu(\cdot \mid T, K)) $ of sampling $T''$ is greater than $0$. Then: 
\[
\lim_{K\to\infty} \mathbb{P}(T'' \!\in \mu(\cdot \mid T, K)) = \lim_{K\to\infty} \mathbb{P}(T'' \!\in \mu(\cdot \mid T', K)) = 1
\]

As mentioned in \Cref{sec:sample-limit-consistency}, this assumption is true whenever the sampling protocol explores the space of reachable states thoroughly enough, or when it canonicalizes the input theorem, transforming it into a specific representative of its equivalence class.
In our methodology described in \Cref{sec:methodology}, we propose a mixture of these ideas: sampling various statements reachable from the input theorem, followed by re-ranking using an energy function.
This way, we approximate canonicalization by optimizing an objective function over a sampled subspace of reachable reformulations.

Under \Cref{ass:sampling_coverage}, we can prove that the ensemble framework is $\mathscr{R}$-invariant in success probability in the limit of computational budget and the number of samples:

\textbf{\Cref{prop:invariance_limit}}
Let $n:\mathbb{N} \rightarrow \mathbb{N}$ be a function such that $\lfloor n(x)/x\rfloor\to\infty$ as $x \to\infty$. Then,
\[
\lim_{K\to\infty}\bar{s}_{\theta,\mu}^{(K,n(K))}(T) \;=\; \lim_{K\to\infty}\bar{s}_{\theta,\mu}^{(K,n(K))}(T')\qquad\text{for every }T\sim_{\mathscr{R}}T'.
\]
so limiting ensemble prover is $\mathscr{R}$-invariant in success probability.

\begin{proof}
Recall that the probability $\bar{s}_{\theta,\mu}^{(K,N)}(T)$ of success of an $(N,K)$-ensemble on statement $T$ is:
\begin{equation}
\label{eq:ens-probability}
\bar{s}_{\theta,\mu}^{(K,N)}(T) \;=\; 1 - \mathbb{E}_{(T_1,\ldots,T_K)\:\sim\:\mu(\cdot\mid T, K)}\!\left[\,\prod_{i=1}^K \bigl(1-s_\theta(T_i)\bigr)^{\lfloor N/K\rfloor}\,\right].
\end{equation}

Let $T\sim_\mathscr{R} T'$ be $\mathscr{R}$-equivalent statements.
Denote by $\mathbb{P}_\mu(T'' \mid T, K)$ the probability that $T''$ will be among the samples drafted by $\mu(\cdot \mid T, K)$.
Suppose there exists some $T''$ with $s_\theta(T'') > 0$, such that for some $K$, $\mathbb{P}_\mu(T'' \mid T, K) > 0$.
Then, by the \Cref{ass:sampling_coverage}, we have:
\[
\lim_{K\to\infty}\mathbb{P}_\mu(T'' \mid T, K) = \lim_{K\to\infty} \mathbb{P}_\mu(T'' \mid T', K) = 1
\]
Since each term $\bigl(1-s_\theta(T_i)\bigr)^{\lfloor N/K\rfloor}$ in \Cref{eq:ens-probability} is bounded above by $1$, we can write:
\[
\begin{aligned}
\mathbb{E}_{(T_1,\ldots,T_K)\:\sim\:\mu(\cdot\mid T, K)}\!\left[\,\prod_{i=1}^K \bigl(1-s_\theta(T_i)\bigr)^{\lfloor N/K\rfloor}\,\right] \quad\leq\quad &\mathbb{P}_\mu(T'' \mid T, K) \cdot \bigl(1-s_\theta(T'')\bigr)^{\lfloor N/K\rfloor} \\&+ \big(1 - \mathbb{P}_\mu(T'' \mid T, K)\big)
\end{aligned}
\]
Indeed, if $T''$ appears among the sampled states, which happens with probability $\mathbb{P}_\mu(T'' \mid T, K)$, the~product is at most $\bigl(1-s_\theta(T'')\bigr)^{\lfloor N/K\rfloor}$.
Otherwise, with the probability of $1 - \mathbb{P}_\mu(T'' \mid T, K)$, we can crudely bound it by $1$.

With this observation, we can derive a lower bound for the limit from the problem statement:
\[
\begin{aligned}
    \lim_{K\to\infty}\bar{s}_{\theta,\mu}^{(K,n(K))}(T) &= \lim_{K\to\infty} \left(1 - \mathbb{E}_{(T_1,\ldots,T_K)\:\sim\:\mu(\cdot\mid T, K)}\!\left[\,\prod_{i=1}^K \bigl(1-s_\theta(T_i)\bigr)^{\lfloor n(K)/K\rfloor}\,\right] \right) \\
    &= 1 - \lim_{K\to\infty} \left(\mathbb{E}_{(T_1,\ldots,T_K)\:\sim\:\mu(\cdot\mid T, K)}\!\left[\,\prod_{i=1}^K \bigl(1-s_\theta(T_i)\bigr)^{\lfloor n(K)/K\rfloor}\,\right] \right) \\
    &\geq 1 - \lim_{K\to\infty}\left(\mathbb{P}_\mu(T'' \mid T, K) \cdot \bigl(1-s_\theta(T'')\bigr)^{\lfloor n(K)/K\rfloor} + \big(1 - \mathbb{P}_\mu(T'' \mid T, K)\big)\right)
\end{aligned}
\]
Now, we already know that
\[
\lim_{K\to\infty} \mathbb{P}_\mu(T'' \mid T, K) = 1
\]
Furthermore, as $n(K)/K \to \infty$ as $K\to\infty$ and $0 \leq 1 - s_\theta(T'') < 1$, we can deduce that
\[
\lim_{K\to\infty} \bigl(1-s_\theta(T'')\bigr)^{\lfloor n(K)/K\rfloor} = 0
\]
Combining these observations, we can derive that
\[
\begin{aligned}
    \lim_{K\to\infty}\bar{s}_{\theta,\mu}^{(K,n(K))}(T)
    &\geq 1 - \lim_{K\to\infty}\left(\mathbb{P}_\mu(T'' \mid T, K) \cdot \bigl(1-s_\theta(T'')\bigr)^{\lfloor n(K)/K\rfloor} + \big(1 - \mathbb{P}_\mu(T'' \mid T, K)\big)\right) \\
    &= 1 - (1 \cdot 0 + (1 - 1)) \\
    &= 1
\end{aligned}
\]
As by definition, $\lim_{K\to\infty}\bar{s}_{\theta,\mu}^{(K,n(K))}(T) \leq 1$, we must have $\lim_{K\to\infty}\bar{s}_{\theta,\mu}^{(K,n(K))}(T) = 1$.
Since the proof of this property did not involve any properties specific to $T$ apart from the fact that $\lim_{K\to\infty}\mathbb{P}_\mu(T'' \mid T, K) = 1$, using $\lim_{K\to\infty} \mathbb{P}_\mu(T'' \mid T', K) = 1$, we can derive an analogous result for $T'$, giving us:
\[
\lim_{K\to\infty}\bar{s}_{\theta,\mu}^{(K,n(K))}(T) \;=\; \lim_{K\to\infty}\bar{s}_{\theta,\mu}^{(K,n(K))}(T') = 1
\]

For the second case, suppose that for all $T''$ with $\mathbb{P}_\mu(T'' \mid T, K) > 0$ for some $K$, the probability $s_\theta(T'')$ that the prover $p_\theta$ solves $T''$ equals $0$. This clearly leads to:
\[
\bar{s}_{\theta,\mu}^{(K,n(K))}(T) \;=\; 0 \qquad \text{for all } K
\]
and consequently
\[
\lim_{K\to\infty}\bar{s}_{\theta,\mu}^{(K,n(K))}(T) \;=\; 0
\]
Then, an analogous result has to hold for $\lim_{K\to\infty}\bar{s}_{\theta,\mu}^{(K,n(K))}(T')$. Indeed, otherwise, there would exist some statement $S$ with $s_\theta(S)>0$ such that $\mathbb{P}_\mu(S \mid T', K) > 0$ for some $K$. By \Cref{ass:sampling_coverage}, since $T'\sim_\mathscr{R} T$, this would imply that 
\[
\lim_{K\to\infty}\mathbb{P}_\mu(S \mid T, K) = \lim_{K\to\infty} \mathbb{P}_\mu(S \mid T', K) = 1
\]
so in particular, there would exist some value of $K$ such that $\mathbb{P}_\mu(S \mid T, K) > 0$.
This contradicts the assumption that $\mu(\cdot \mid T, K)$ cannot sample any statement with positive provability chance $s_\theta$. Hence,
\[
\lim_{K\to\infty}\bar{s}_{\theta,\mu}^{(K,n(K))}(T) \;=\; \lim_{K\to\infty}\bar{s}_{\theta,\mu}^{(K,n(K))}(T') = 0
\]
Either way,
\[\lim_{K\to\infty}\bar{s}_{\theta,\mu}^{(K,n(K))}(T) \;=\; \lim_{K\to\infty}\bar{s}_{\theta,\mu}^{(K,n(K))}(T')\]
\end{proof}

\clearpage
\subsection{Proof of \Cref{prop:monotonicity}}
\label[appendix]{sec:proof-monotonicity}
As the second theoretical result about the proposed ensembling framework, we establish that under minor symmetry assumptions on the success rate of equivalent theorems, increasing the number of considered reformulations improves the chances of finding a correct proof.

The assumption used for this proposition states that for a given prover $p_\theta$, statement $T$ and a rewriting category $\mathscr{R}$, the probability $s_\theta(T')$ of producing a correct proof for $T' \in [T]_\mathscr{R}$ can be viewed as a random variable sampled from a distribution $P_{[T]_\mathscr{R}}$, characteristic for the equivalence class $[T]_\mathscr{R}$.
Intuitively, this means that given two equivalent formulations of the same problem, we cannot predict which one of them will be more favorable to $p_\theta$.
This is motivated by the empirical examples of inconsistency of modern formal provers (\Cref{fig:gcd_example}, \Cref{sec:results}).

Before proving the main theorem, we first establish a helpful auxiliary result:

\begin{lemma}
    \label{lem:holder_x}
    Let $X$ be a $[0,1]$-valued random variable, and let $K \leq K'$ be divisors of $N$. Then,
    \[
    \mathbb{E}\Big[\, X^{\lfloor N/K'\rfloor}\,\Big]^{K'} \leq \mathbb{E}\Big[\, X^{\lfloor N/K\rfloor}\,\Big]^{K}
    \]
\end{lemma}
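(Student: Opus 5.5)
Since $K$ and $K'$ divide $N$, the floors are exact: set $a = N/K'$ and $b = N/K$, so $b \ge a \ge 1$. Then $K' = N/a$ and $K = N/b$. Raising both sides of the claimed inequality to the positive power $1/N$ (monotone on $[0,\infty)$) shows it is equivalent to
\[
\mathbb{E}\big[X^{a}\big]^{1/a} \;\le\; \mathbb{E}\big[X^{b}\big]^{1/b}.
\]
This is monotonicity of $L^p$ norms on a probability space, the Lyapunov inequality, applied to the nonnegative variable $X$ with exponents $a \le b$. So the plan is to reduce to this norm comparison and then prove or cite it.

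For the norm comparison I would use Jensen's inequality. The function $\varphi(y) = y^{b/a}$ is convex on $[0,\infty)$ since $b/a \ge 1$. Applying Jensen to the random variable $Y = X^{a}$ gives
\[
\mathbb{E}[X^{a}]^{b/a} = \varphi(\mathbb{E}[Y]) \le \mathbb{E}[\varphi(Y)] = \mathbb{E}[X^{b}].
\]
Taking $b$-th roots gives the displayed inequality. Equivalently, one can apply Hölder with exponents $b/a$ and its conjugate to $X^{a}\cdot 1$.

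Finally, I undo the reduction. Raise both sides of $\mathbb{E}[X^a]^{1/a} \le \mathbb{E}[X^b]^{1/b}$ to the power $N$, using $N/a = K'$ and $N/b = K$. This recovers
\[
\mathbb{E}\big[X^{\lfloor N/K'\rfloor}\big]^{K'} \le \mathbb{E}\big[X^{\lfloor N/K\rfloor}\big]^{K}.
\]
Boundedness of $X$ in $[0,1]$ guarantees that all expectations exist and are nonnegative, so every power manipulation is legitimate.

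There is no real obstacle; the only point needing care is the exponent bookkeeping, namely that divisibility makes the floors exact and that $K\le K'$ corresponds to $b \ge a$. For the strict version later needed in \Cref{prop:monotonicity}, I would note that Jensen is strict when $b>a$ and $X$ is non-degenerate.
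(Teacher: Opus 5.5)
Your proof is correct and is essentially the paper's argument: the paper applies H\"older with exponents $b/a$ and $b/(b-a)$ to $X^a\cdot 1$ to get $\mathbb{E}[X^a]\le \mathbb{E}[X^b]^{a/b}$, which is the same Lyapunov inequality you obtain via Jensen (and you mention the H\"older route yourself). Your $1/N$-power normalization is just a tidier version of the paper's step of raising to the power $K'$ and using $aK'/b = K$.
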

\begin{proof}
    If $K = K'$, we trivially get equality, so suppose $K < K'$.
    Let $a = { N/K'}$ and $b = {N/K}$.
    Then, $a \!<\! b$.
    Applying H\"older's inequality with exponents $r = \frac{b}{a} = \frac{K'}{K} > 1$ and $s = \frac{b}{b-a} > 1$, gives:
    \[
    \mathbb{E}[X^a] = \mathbb{E}[X^a \cdot 1^b] \leq (\mathbb{E}[(X^a)^r])^\frac{1}{r}\cdot(\mathbb{E}[(1^b)^s])^\frac{1}{s} = (\mathbb{E}[X^b])^\frac{a}{b}
    \]
    Therefore, raising both sides to the power $K'$, we obtain:
    \[
    \mathbb{E}\Big[\, X^{\lfloor N/K'\rfloor}\,\Big]^{K'} 
    = (\mathbb{E}[X^a]) ^ {K'}
    \leq \big((\mathbb{E}[X^b])^\frac{a}{b}\big)^{K'}
    = \mathbb{E}\Big[\, X^{\lfloor N/K\rfloor}\,\Big]^{\frac{a}{b}\cdot K'}
    = \mathbb{E}\Big[\, X^{\lfloor N/K\rfloor}\,\Big]^{K}
    \]
\end{proof}

\textbf{\Cref{prop:monotonicity}}
Let $\mathscr{R}$ be reciprocal, and $T\in\mathscr{R}$ be a statement.
Assume that for any $T' \sim _\mathscr{R} T$, the success probability $s_\theta(T') \in [0,1]$ can be viewed as an independent random variable sampled from some (prior) distribution $P_{[T]_\mathscr{R}}$. Then, for any budget $N$ and divisors $K\le K'$ of $N$,
\[
\bar{s}_{\theta,\mu}^{(K,N)}(T) \;\le\; \bar{s}_{\theta,\mu}^{(K',N)}(T),
\]
with strict inequality whenever $s_\theta$ is non-constant on the support of $\mu(\cdot\mid T, K) \subseteq [T]_{\mathscr{R}}$.

\begin{proof}
    Recall that if $\mathscr{R}$ is reciprocal, any statement $T'$ reachable from $T$ by arrows of $\mathscr{R}$ is, in fact, $\mathscr{R}$-equivalent to $T$.
    Hence, $\mu(\cdot \mid T, K)$ samples from $[T]_\mathscr{R}$.
    The success rate $s_\theta(T_i)$ of the $i^\text{th}$ sampled statement can be seen as a random variable $S_i$ following the distribution $S_i \sim \PTR$.
    Moreover, these random variables are independent, so we can rewrite the probability $\bar{s}_{\theta,\mu}^{(K,N)}(T)$ as:
    \[
    \begin{aligned}
    \bar{s}_{\theta,\mu}^{(K,N)}(T)
    &= 1 - \mathbb{E}_{\{T_1,\ldots,T_K\}\:\sim\:\mu(\cdot\mid T, K)}\!\left[\,\prod_{i=1}^K \bigl(1-s_\theta(T_i)\bigr)^{\lfloor N/K\rfloor}\,\right] \\
    &= 1 - \mathbb{E}_{\{T_1,\ldots,T_K\}\:\sim\:\mu(\cdot\mid T, K)}\!\left[\,\prod_{i=1}^K \bigl(1-S_i\bigr)^{\lfloor N/K\rfloor}\,\right] \\
    &= 1 - \mathbb{E}_{\{T_1,\ldots,T_K\}\:\sim\:\mu(\cdot\mid T, K)}\!\left[\mathbb{E}\left[\,\prod_{i=1}^K \bigl(1-S_i\bigr)^{\lfloor N/K\rfloor}\,\right]\right] \\
    &= 1 - \mathbb{E}_{\{T_1,\ldots,T_K\}\:\sim\:\mu(\cdot\mid T, K)}\!\left[\prod_{i=1}^K \mathbb{E}\Big[\, \bigl(1-S_i\bigr)^{\lfloor N/K\rfloor}\,\Big]\right] \\
    &= 1 - \mathbb{E}_{\{T_1,\ldots,T_K\}\:\sim\:\mu(\cdot\mid T, K)}\!\left[\prod_{i=1}^K \mathbb{E}\Big[\, \bigl(1-S\bigr)^{\lfloor N/K\rfloor}\,\Big]\right] \\
    &= 1 - \mathbb{E}\!\left[\prod_{i=1}^K \mathbb{E}\Big[\, \bigl(1-S\bigr)^{\lfloor N/K\rfloor}\,\Big]\right] \\
    &= 1 -  \mathbb{E}\Big[\, \bigl(1-S\bigr)^{\lfloor N/K\rfloor}\,\Big]^K \\
    \end{aligned}
    \]
    where $S\sim\PTR$. Analogously,
    \[
    \begin{aligned}
    \bar{s}_{\theta,\mu}^{(K',N)}(T)
    &= 1 - \mathbb{E}\Big[\, \bigl(1-S\bigr)^{\lfloor N/K'\rfloor}\,\Big]^{K'}
    \end{aligned}
    \]
    For notational convenience, let $X = 1-S$. It suffices to show:
    \[
    1 - \mathbb{E}\Big[\, X^{\lfloor N/K'\rfloor}\,\Big]^{K'} \geq 1 - \mathbb{E}\Big[\, X^{\lfloor N/K\rfloor}\,\Big]^{K}
    \]
    which follows trivially from \Cref{lem:holder_x}. Tracing back the argument using H\"older's inequality, we can additionally conclude that the equality holds if random variables $X^{\lfloor N/K\rfloor}$ and $1$ are proportional, which is equivalent to requiring $X$ to be constant.
    This, in turn, translates to $s_\theta$ being constant on the support $\mu(\cdot \mid T, K)$.
\end{proof}
\section{Prompts}
\label[appendix]{app:prompts}

\subsection{Rewrite generation prompt for GPT}
\label[appendix]{sec:rewrite_prompt}

\begin{lstlisting}
You are an expert Lean4 mathematician.
Task:
Generate {variants_min}-{variants_max} alternative theorem statements that are semantically equivalent 
to the provided Lean theorem. Produce as many high-quality distinct rewrites as possible, up to the maximum.
Hard constraints:
1) Keep the same variables and hypotheses names exactly as in the input.
2) Keep the same theorem name stem and append suffixes `_v2`, `_v3`, ... in increasing order.
3) Do not include proofs; every theorem must end exactly at `:= by` with nothing after 
(`sorry`, tactics, or extra lines).
4) Each rewrite must remain mathematically equivalent to the original theorem.
5) Avoid trivial whitespace-only edits. Prefer algebraic/logical rewrites.
6) DO NOT substitute hypothesis values/terms into other hypotheses or the goal (e.g. if `b = 30`, do not replace `b` with `30`).
7) Do not weaken/strengthen the theorem or add junk conjuncts/disjuncts (forbidden: `\and True`, `\iff False`, extra unrelated assumptions/goals).
Diversity requirement:
- Include several SIMPLE rewrites that only use symmetry/commutativity style transformations.
- Examples of SIMPLE rewrites:
  * `a = b` -> `b = a`
  * `a < b` -> `b > a`
  * `2 * x = y` -> `y = x * 2`
  * `u + v = c` -> `v + u = c`
  * `x ^ 3` -> `x * x * x`
  * `x ^ 2` -> `x * x`
  * `a * b` -> `b * a`
- Across the whole set, rewrite every hypothesis and the goal at least once when possible.
- Also include non-trivial but equivalent reformulations (not only simple ones).
Coverage checklist (must satisfy across produced variants):
- At least 3 variants must rewrite power notation (`^ 2` or `^ 3`) into repeated multiplication.
- At least 3 variants must use explicit symmetry/commutativity in products or equalities.
- At least 2 variants should rewrite division-style equalities into product-style equalities when safe.
  Example pattern: `a / b = c` -> `a = c * b` or `a = b * c`.
- At least 2 variants should alter parenthesization/ordering of multiplicative expressions.
- At least 2 variants should convert '|' divisibility statements into [MOD] or [ZMOD] expressions, and vice versa.
- Do not keep one key hypothesis text unchanged in almost all variants.
Allowed transformations include:
- equivalent rearrangements of equalities/inequalities
- equivalent predicate forms (`a | b` vs congruence/mod-0 forms when appropriate)
- syntactic rewrites using commutativity/associativity/distributivity
- rewriting `0 < x` as `x > 0`, rewriting `eg Even x` as `Odd x` when appropriate
- rewriting equations into equivalent multiplied/divided forms without changing assumptions
- replacing expressions with provably equivalent forms
Few-shot examples (from existing miniF2F 50x20 rewrites):
f{examples}
Now produce rewrites for this theorem:
f-- theorem id: {theorem_label}
```lean4
f{formal_statement.strip()}
```
Output format:
- Output ONLY Lean theorem declarations.
- Separate each theorem block with one blank line.
- Start from suffix `_v2`.
- Do not use markdown code fences.
- Never print `sorry` or a proof body after `:= by`.
\end{lstlisting}

\subsection{Variable renaming prompt for GPT}
\label[appendix]{sec:rename_prompt}

\begin{lstlisting}
You are an expert Lean 4 assistant.
Task:
Given a single Lean 4 theorem declaration (through `:= by`), produce a 
**renaming** of its variables and hypothesis names only. The mathematical 
meaning must stay the same (α-equivalence): only binder identifiers change.
Rules:
1) Keep the same theorem name as in the input.
2) Do not change the conclusion or hypothesis **types** except where a name 
appears as a binder (you may rename bound variables consistently).
3) `variable_map` maps old variable names → new names (e.g. `x` → `u`).
4) `hypothesis_map` maps old hypothesis names → new names (e.g. `h` → `h1` or `h0`).
5) If you rename nothing in a category, use an empty object `{}` for that map.
6) `original` must be exactly the input theorem string (after trimming).
7) `renamed` must be the full theorem line(s) ending with `:= by` only - no `sorry`, 
tactics, or proof body after `by`.
Output format:
- Output **only** one JSON object, no markdown fences, no commentary.
- Schema:
'  {original: ..., renamed: ..., variable_map: {...}, hypothesis_map: {...}}'
f-- theorem id: {label}
Input theorem:```lean4
f{stmt}
```
\end{lstlisting}
\clearpage
\subsection{State-to-statement conversion prompt}
\label[appendix]{sec:state_to_statement_prompt}
\begin{lstlisting}
You are an expert Lean4 formalization assistant.
Task:
Given an original Lean4 theorem statement for reference, and a tactic state created after 
applying some augmentations to the original theorem, create a theorem statement corresponding 
to the provided augmented state.
Requirements:
1) Keep theorem name and binder structure aligned with the original statement unless the augmented state requires changes.
2) Preserve hypothesis order from the augmented state.
3) Preserve the augmented state's mathematics exactly (do not simplify away changes).
4) Replace shorthand notation with explicit Lean forms when appropriate (e.g. `\sqrt2` -> `Real.sqrt 2`, `\pi` -> `Real.pi`).
5) Output must contain exactly one block in Lean markdown fence format:
   ```lean4
   theorem ... := by
   ```
6) Do not output any explanation or extra text outside that block.
7) Do not include a proof body, tactics, or sorry; stop exactly at `:= by`.
Original statement:
{original_statement}
Original state:
{original_state}
Augmented state:
{augmented_state}
\end{lstlisting}


\end{document}